\def\eqref#1{equation~\ref{#1}}
\def\1{\bm{1}}
\DeclareMathAlphabet{\mathsfit}{\encodingdefault}{\sfdefault}{m}{sl}
\SetMathAlphabet{\mathsfit}{bold}{\encodingdefault}{\sfdefault}{bx}{n}
\newcommand{\E}{\mathbb{E}}
\DeclareMathOperator*{\argmax}{arg\,max}
\DeclareMathOperator*{\argmin}{arg\,min}
\theoremstyle{plain}
\newtheorem{theorem}{Theorem}[section]
\newtheorem{proposition}[theorem]{Proposition}
\theoremstyle{definition}
\theoremstyle{remark}
\newcommand{\x}{\mathbf{x}}
\newcommand{\y}{\mathbf{y}}
\newcommand{\z}{\mathbf{z}}
\newcommand{\bb}{\mathbf{b}}
\newcommand{\X}{\mathbf{X}}
\newcommand{\Y}{\mathbf{Y}}
\newcommand{\B}{\mathbf{B}}
\newcommand{\bL}{\mathbf{L}}
\newcommand{\e}{\mathbf{e}}
\newcommand{\w}{\mathbf{w}}
\title{Foundation Model-oriented Robustness: Robust Image Model Evaluation with Pretrained Models}
\newcommand\email[2][]%
   {\newaffiltrue\let\AB@blk@and\AB@pand
      \if\relax#1\relax\def\AB@note{\AB@thenote}\else\def\AB@note{\relax}%
        \setcounter{Maxaffil}{0}\fi
      \begingroup
        \let\protect\@unexpandable@protect
        \def\thanks{\protect\thanks}\def\footnote{\protect\footnote}%
        \@temptokena=\expandafter{\AB@authors}%
        {\def\\{\protect\\\protect\Affilfont}\xdef\AB@temp{#2}}%
         \xdef\AB@authors{\the\@temptokena\AB@las\AB@au@str
         \protect\\[\affilsep]\protect\Affilfont\AB@temp}%
         \gdef\AB@las{}\gdef\AB@au@str{}%
        {\def\\{, \ignorespaces}\xdef\AB@temp{#2}}%
        \@temptokena=\expandafter{\AB@affillist}%
        \xdef\AB@affillist{\the\@temptokena \AB@affilsep
          \AB@affilnote{}\protect\Affilfont\AB@temp}%
      \endgroup
       \let\AB@affilsep\AB@affilsepx
}
\author[1]{Peiyan Zhang}
\author[2]{Haoyang Liu}
\author[$3^{*}$]{Chaozhuo Li}
\author[3]{Xing Xie}
\author[1]{Sunghun Kim}
\author[2\thanks{Co-corresponding authors}]{Haohan Wang}
\affil[1]{Hong Kong University of Science and Technology}
\email{\url{{pzhangao,hunkim}@cse.ust.hk}}
\affil[2]{University of Illinois at Urbana-Champaign}
\email{\url{{hl57, haohanw}@illinois.edu}}
\affil[3]{Microsoft Research Asia}
\email{\url{{cli,xingx}@microsoft.com}}
\begin{document}

\maketitle

\begin{abstract}
Machine learning has demonstrated remarkable performance over finite datasets, 
yet whether the scores over the fixed benchmarks can sufficiently indicate
the model's performance in the real world 
is still in discussion. In reality, an ideal robust model will probably  
behave similarly to the oracle (\textit{e.g.}, the human users), 
thus a good evaluation protocol is probably to evaluate the models' behaviors 
in comparison to the oracle. 
In this paper, we introduce a new robustness measurement that directly measures the image classification model’s performance compared with a surrogate oracle (\textit{i.e.,} a zoo of foundation models). 
Besides, we design a simple method that can accomplish the evaluation beyond the scope of the benchmarks.
Our method extends the image datasets 
with new samples that are sufficiently perturbed 
to be distinct from the ones in the original sets, 
but are still bounded within the same image-label structure 
the original test image represents, 
constrained by a zoo of foundation models  
pretrained with a large amount of samples. 
As a result, 
our new method will offer us a new way 
to evaluate the models' robustness performance,
free of limitations of fixed benchmarks or constrained perturbations, 
although scoped by the power of the oracle.  
In addition to the evaluation results, 
we also leverage our generated data
to understand the behaviors of the model 
and our new evaluation strategies.

\end{abstract}

\section{Introduction}
\label{sec:intro}
Machine learning has achieved remarkable performance over various benchmarks. For example, the recent successes of various neural network architectures~\citep{he2016deep,touvron2021training} has shown strong numerical evidence 
that the prediction accuracy over specific tasks
can reach the position of the leaderboard 
as high as a human,
suggesting different application scenarios of these methods. 
However, these methods deployed in the real world often 
underdeliver its promises made through the benchmark datasets \citep{Edwards2019,d2020underspecification}, 
usually due to the fact that 
these benchmark datasets, 
typically \textit{i.i.d},
cannot sufficiently 
represent the diversity of the samples 
a model will encounter after being deployed in practice~\citep{recht2019imagenet, wu2023discover}.

Fortunately, multiple lines of study have aimed to embrace this challenge, 
and most of these works are proposing 
to further diversify the datasets used at the evaluation time. 
We notice these works mostly fall into two main categories:
(1) the works that study the performance over testing datasets
generated by predefined perturbation over the original \textit{i.i.d} datasets,
such as adversarial robustness \citep{szegedy2013intriguing,goodfellow2015explaining} or robustness against certain noises \citep{geirhos2018imagenettrained,wang2020high};
and (2) the works that study the performance over testing datasets 
that are 
collected anew with a procedure/distribution different from the one for training sets, 
such as domain adaptation \citep{ben2007analysis,ben2010theory} and domain generalization \citep{muandet2013domain}.

\begin{figure*}[t]
\centering
\includegraphics[width=0.9\textwidth]{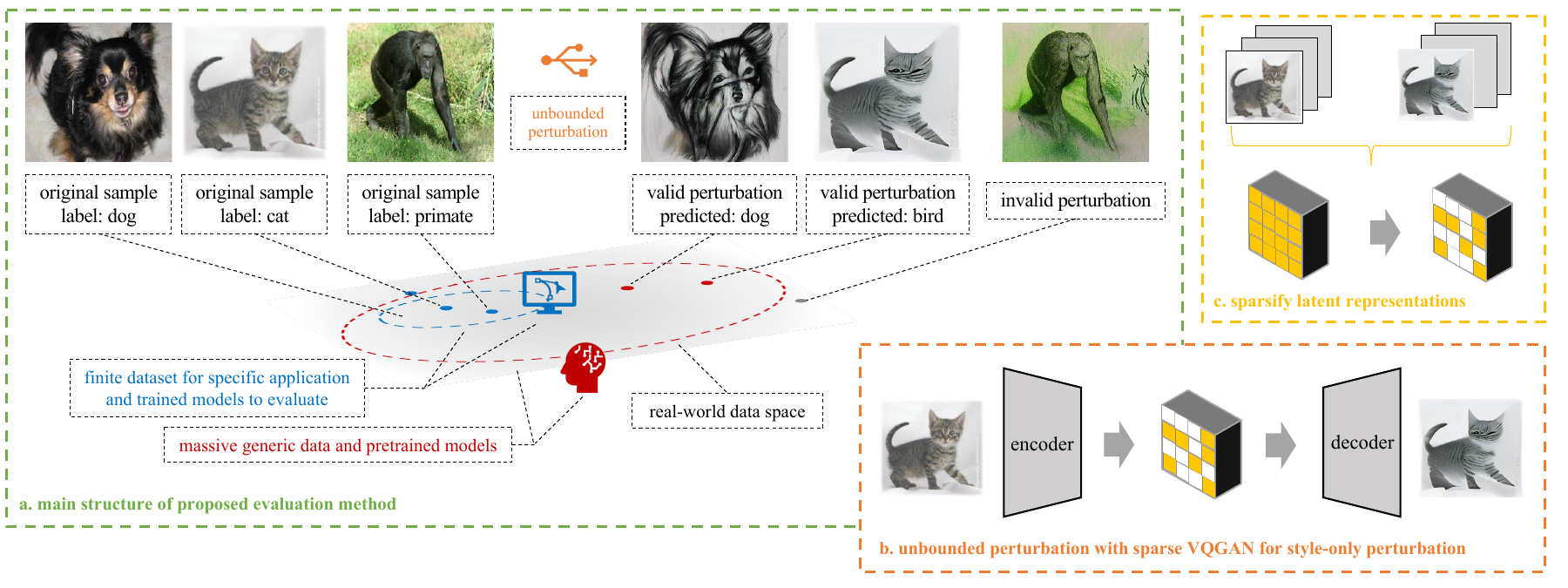} 
\caption{The main structure of our system to generate test images with foundation models and examples of the generated images with their effectiveness in evaluation of model's robustness.}
\label{fig:main}
\noindent 
\end{figure*}



Both of these lines, while pushing the study of robustness evaluation further, 
mostly have their own 
advantages and limitations as a tradeoff on how to guarantee the underlying image-label structure of  
evaluation samples will be the same as the training samples:  
perturbation based evaluations usually 
maintain the image-label structure by predefining the perturbations within a set of operations that will not alter the image semantics, such as 
$\ell$-norm ball constraints \citep{carlini2019evaluating}, or texture \citep{geirhos2018imagenettrained}, frequency-based \citep{wang2020high} perturbations, but are relatively limited in the variety of perturbations allowed.
On the other hand, new-dataset based evaluations maintain the image-label structure by 
soliciting the efforts of human annotators to construct datasets with the same semantics but significantly different styles \citep{hendrycks2021natural,hendrycks2019benchmarking}. However, such new datasets may be costly to collect, and a potential issue is that they are fixed once collected and published to the research community. Ranking methods based on the fixed datasets will eventually lead to the methods overfit on certain datasets~\citep{duda1973pattern,friedman2001elements}.
While recent efforts have tried to alleviate the selection bias by collecting data from multiple sources ~\citep{gulrajani2020search,koh2021wilds,ye2021ood,wang2022adaptive}, we kindly argue that a dynamic process of generating evaluation datasets will certainly further mitigate this issue.



In this paper, we investigate how to diversify the robustness evaluation datasets to make the evaluation results credible and representative. 
As shown in Figure~\ref{fig:main}, we aim to integrate the advantages of the above two directions by introducing a new protocol to generate evaluation datasets that can automatically perturb the samples to be sufficiently different from existing test samples, while maintaining the underlying unknown image-label structure with respect to a zoo of foundation models. Based on the new evaluation protocol, we introduce a new robustness metric that measures the robustness compared with the foundation model. Moreover, with our proposed evaluation protocol and metric, we make a study of current robust machine learning techniques to identify the robustness 
gap between existing models and the foundation model. 



Therefore, our contributions in this paper are three-fold:
\begin{itemize}[leftmargin=*,noitemsep,topsep=0pt]
    \item We introduce a new robustness metric that measures the robustness gap between models and the foundation model.
    \item We introduce a new evaluation protocol to generate evaluation datasets that can automatically perturb the samples to be sufficiently different from existing test samples, while maintaining the underlying unknown image-label structure.
    \item We leverage our protocol to conduct a systematic study on robustness evaluation, providing insights about deep learning model behavior, opening up future research directions.
    

\end{itemize}

\section{Background}
\label{sec:background}


\paragraph{Current Robustness Evaluation Protocols.}
The evaluation of machine learning models in non-\textit{i.i.d} scenario have been studied 
for more than a decade, and one of the pioneers is probably \emph{domain adaptation} \citep{ben2010theory}. 
In domain adaptation, the community trains the model over data from one distribution and tests the model with samples from a different distribution; in \emph{domain generalization} \citep{muandet2013domain}, the community trains the model over data from several related distributions and test the model with samples from yet another distribution. 
To facilitate the development of cross-domain robust image classification, the community has introduced several benchmarks, 
such as PACS \citep{li2017deeper}, ImageNet-A \citep{hendrycks2021natural}, ImageNet-C \citep{hendrycks2019benchmarking}, ImageNet-Sketch \citep{wang2019learning}, and collective benchmarks integrating multiple datasets such as 
WILDS \citep{koh2021wilds}, and OOD Bench \citep{ye2021ood}.


While these datasets clearly maintain the underlying image-label structure of the images, 
a potential issue is that these evaluation datasets are fixed once collected. 
Thus, if the community relies on these fixed benchmarks repeatedly to rank methods, 
eventually the selected best method may not be a true reflection of the world, but a model that can fit certain datasets exceptionally well. 
This phenomenon has been discussed by several textbooks \citep{duda1973pattern,friedman2001elements}. 
While recent efforts in evaluating collections of datasets \citep{gulrajani2020search,koh2021wilds,ye2021ood} 
might alleviate the above potential hazards of ``model selection with test set'', 
a dynamic process of generating evaluation datasets will certainly further mitigate this issue. 

On the other hand, one can also test the robustness of models by dynamically perturbing the existing datasets. For example, one can test the model's robustness against rotation \citep{marcos2016learning}, texture \citep{geirhos2018imagenettrained}, frequency-perturbed datasets \citep{wang2020high}, or adversarial attacks (\textit{e.g.}, $\ell_p$-norm constraint perturbations) \citep{szegedy2013intriguing}. 
While these tests do not require additionally collected samples, 
these tests typically limit the perturbations to be relatively well-defined
(\textit{e.g.}, a texture-perturbed cat image still depicts a cat because the shape of the cat is 
preserved during the perturbation). 

While this perturbation test strategy leads to datasets dynamically generated along the evaluation, 
it is usually limited by the variations of the perturbations allowed. 
For example, one may not be able to use some significant distortion of the images 
in case the object depicted may be deformed and the underlying image-label structure of the images is distorted. 
Generally speaking,
most of the current perturbation-based test protocols are scoped by the tradeoff that a minor perturbation might not introduce enough variations to the existing datasets, 
while a significant perturbation will potentially destroy the underlying image-label structures. 

\paragraph{Assumed Desiderata of Robustness Evaluation Protocol.}
As a reflection of the previous discussion, 
we attempt to offer a summary list of three desired properties 
of the datasets serving as the benchmarks for robustness evaluation: 
\begin{itemize}[leftmargin=*,noitemsep,topsep=0pt]
    \item \textbf{Stableness in Image-label Structure:} 
    the most important property of the evaluation datasets is that the samples must represent the same underlying image-label structure as the training samples. 
    \item \textbf{Effectiveness in Generated Samples:} the test samples should be effective in exposing defects for tested models. 
    \item \textbf{A Dynamic Generation Process:} to mitigate selection bias of the models over techniques that focus too attentively to the specification of datasets, ideally, the evaluation protocol should consist of a dynamic set of samples, preferably generated with the tested model in consideration.  
\end{itemize}


\paragraph{Necessity of New Robustness Measurement in Dynamic Evaluation Protocol.}
In previous  experiments, two settings are commonly used: a "standard" test set and a perturbed test set. Previous approaches rank models based on accuracy under perturbed test set~\citep{geirhos2018imagenettrained,hendrycks2021many,orhan2019robustness,xie2020self,zhang2019making} or other metrics such as inception score~\citep{salimans2016improved}, effective robustness~\citep{taori2020measuring} and relative robustness~\citep{taori2020measuring}. While useful for initial assessments, these metrics do not fully capture robustness in dynamic evaluation protocols. Here, comparing two models on different dynamic test sets cannot definitively determine superior model robustness, as differences in performance may result from varying test set difficulties.


The main challenge identified is the lack of a consistent robustness metric across test sets. Ideally, a robust model should mirror the behavior of the foundation model (e.g., human users). Therefore, a direct measurement of model robustness relative to the foundation model is preferable over indirect model comparisons.


\section{Method - Counterfactual Generation with Surrogate Oracle}
\label{sec:method}

\subsection{Method Overview}
We use $(\x,\y)$ to denote an image sample and its corresponding label, and use $\theta(\x)$ to denote the model we aim to evaluate, which takes an input of the image and predicts the label. 

\begin{wrapfigure}{R}{0.48\textwidth}
    \begin{minipage}{0.48\textwidth}
\begin{algorithm}[H]
\begin{algorithmic}
\small
\STATE {\bfseries Input:} $(\X, \Y)$, $\theta$, $g$, $h$, total number of iterations $\B$
\STATE {\bfseries Output:} generated dataset $(\widehat{\X}, \Y)$
\FOR{each $(\x,\y)$ in $(\X, \Y)$}
\STATE generate $\widehat{\x}_0 = g(\x, \bb_0;\theta)$
\IF{$h(\widehat{\x}_0)=\y$}
\STATE set $\widehat{\x} = \widehat{\x}_0$
\FOR{iteration $\bb_t < \B$}
\STATE generate $\widehat{\x}_t = g(\widehat{\x}_{t-1}, \bb_t;\theta)$
\IF{$h(\widehat{\x}_t)=\y$}
\STATE set $\widehat{\x} = \widehat{\x}_t$
\ELSE 
\STATE set $\widehat{\x} = \widehat{\x}_{t-1}$
\STATE exit FOR loop 
\ENDIF
\ENDFOR
\ELSE
\STATE set $\widehat{\x} = \x$
\ENDIF
\STATE use $(\widehat{\x},\y)$ to construct $(\widehat{\X}, \Y)$
\ENDFOR
 
 \end{algorithmic}
 \caption{Perturbed Image Generation with Foundation Models}
 \label{alg:main}
\end{algorithm}
\end{minipage}
\end{wrapfigure}

We use $g(\x, \bb)$ to denote an image generation system, which takes an input of the starting image $\x$ to generate another image $\widehat{\x}$ within the computation budget $\bb$. 
The generation process is performed as an optimization process to maximize a scoring function $\alpha(\widehat{\x}, \z)$ that evaluates the alignment between the generated image and generation goal $\z$ guiding the perturbation process. 
The higher the score is, the better the alignment is. 
Thus, the generation process is formalized as 
\begin{align*}
    \widehat{\x} = \argmax_{\widehat{\x} = g(\x, \bb), \bb < \B}\alpha(g(\x, \bb), \z),
\end{align*}
where $\B$ denotes the allowed computation budget for one sample.
This budget will constrain the generated image not far from the starting image so that the generated one does not converge to a trivial solution that maximizes the scoring function. 

In addition, 
we choose the model classification loss $l(\theta(\widehat{\x}), \y)$ as $\z$.
Therefore, the scoring function essentially maximizes the loss of a given image in the direction of a different class. 

Finally, to maintain the unknown image-label structure
of the images, 
we leverage the power of the pretrained giant models
to scope the generation process:
the generated images 
must be considered within the same class 
by the pretrained model, denoted as $h(\widehat{\x})$, 
which takes in the input of the image and makes a prediction. 

Connecting all the components above, 
the generation process will aim to optimize the following: 
\begin{align*}
    \widehat{\x} = & \argmax_{\widehat{\x} = g(\x, \bb), \bb < \B, \z = l(\theta(\widehat{\x}), \y)}
    \alpha(g(\x, \bb), \z), \\
    & \textnormal{subject to} \quad h(\widehat{\x}) = \y .
\end{align*}



Our method is generic and agnostic to the choices of the three major components, namely $\theta$, $g$, and $h$. 
For example, the $g$ component can vary from something as simple as basic transformations adding noises or rotating images
to a sophisticated method to transfer the style of the images;
on the other hand, the $h$ component can vary from an approach with high reliability and low efficiency such as actually outsourcing the annotation process to human labors
to the other polarity of simply assuming a large-scale pretrained model can function plausibly as a human. 

In the next part, we will introduce our 
concrete choices of $g$ and $h$ leading to the later empirical results,
which build upon the recent advances of vision research.
  

\subsection{Engineering Specification}

We use VQGAN~\citep{esser2021taming} as the image generation system $g(\x, \bb)$, 
and the $g(\x, \bb)$ is boosted by the evaluated model $\theta(\x)$ 
serving as the $\alpha(\widehat{\x}, \z)$
to guide the generation process, 
where $\z = l(\theta(\widehat{\x}), \y)$ is the model classification loss on current perturbed images. 


The generation is an iterative process guided by the scoring function: at each iteration, the system adds more style-wise transformations to the result of the previous iteration. Therefore, the total number of iterations allowed is denoted as the budget $\B$ (see Section~\ref{sec:ss} for details of finding the best perturbation). In practice, the value of budget $\B$ is set based on the resource concerns. 

To guarantee the image-label structure of images, 
we consider using foundation models, \textit{e.g.,} the CLIP~\citep{radford2021learning} model, to serve as $h$, and
design the text fragment input of CLIP to be \textit{``an image of \{class\}''}. However, given the CLIP model's less-than-perfect zero-shot accuracy on most of the base datasets~\citep{radford2021learning}, there exists a potential risk of introducing a label noise to the generated test set. Therefore, in order to reduce the dependency of robustness evaluation on the robustness of the specific foundation model, we employ a majority voting mechanism across an ensemble of multiple foundation models to validate the correctness of labels assigned to the generated images. In our experiments, we assemble a zoo of foundation models, including the CLIP model, ConvNeXt-T-CvSt~\citep{singh2023revisiting} from the RobustBench Leaderboard~\citep{croce2020robustbench}, and CoCa~\citep{Yu2022CoCaCC} from the robust foundation models leaderboard\footnote{\noindent https://paperswithcode.com/sota/zero-shot-transfer-image-classification-on-4}. The ensemble of these foundation models is drawn from diverse sources and exhibits variability, thus enhancing the credibility of label validation for the generated images through a collective majority vote. Afterwards, we directly optimize VQGAN encoder space which is guided by our scoring function.
We show the algorithm in Algorithm~\ref{alg:main}.

\subsection{Measuring Robustness}

\textbf{Foundation Model-oriented Robustness (FMR).} 
By design, the image-label structures of perturbed images will be maintained by the foundation model. Thus, a smaller accuracy drop on the perturbed images indicates more similar predictions to foundation models. To precisely define FMR, we introduce perturbed accuracy (PA), 
the accuracy on the perturbed images that our generative model successfully produces.
As the standard accuracy on clean test set (SA) may influence PA to some extent, to disentangle PA from SA, we normalize PA with SA as FMR:
{
\begin{equation}
    \textnormal{FMR} = \frac{\textnormal{PA}}{\textnormal{SA}} \times 100\%\nonumber
\end{equation}}

In settings where the foundation model is human labors, FMR measures the robustness difference between the evaluated model and human perception. In our experiment setting, FMR measures the robustness difference between models trained on fixed datasets (the tested model) and the models trained on unfiltered, highly varied, and highly noisy data (the zoo of foundation models).

At last, 
we devote a short paragraph 
to kindly remind some readers that, 
despite the alluring idea of designing systems 
that forgo the usages of underlying image-label structure
or foundation model, 
it has been proved or argued multiple times 
that it is impossible to 
create that knowledge with nothing but data, 
in either context of machine learning~\citep{locatello2019challenging,mahajan2019preserving,wang2021toward,zhao2022learning,zhao2023beyond} or causality~\citep{bareinboim2020pearl},~\citep[][Sec. 1.4]{pearl2009causality}. 


\section{Experiments - Evaluation and Understanding of Models}
\label{sec:evaluation}
\subsection{Experiment Setup}
\label{sec:setup}

We consider four different scenarios, ranging from the basic benchmark MNIST \citep{lecun1998gradient}, through CIFAR10 \citep{krizhevsky2009learning}, 9-class ImageNet \citep{santurkar2019image}, to full-fledged 1000-class ImageNet \citep{deng2009imagenet}. For ImageNet, we resize all images to $224 \times 224~\textrm{px}$. We also center and re-scale the color values with $\mu_{RGB}=[0.485, 0.456, 0.406]$ and $\sigma=[0.229, 0.224, 0.225]$. The perturbation step size for each iteration is 0.001. The total number of iterations allowed (computation budget $\B$) is 50.  

For each of the experiment, we report a set of three results:
\begin{itemize}[leftmargin=*,noitemsep,topsep=0pt]
    \item Standard Accuracy (SA): the clean test accuracy of the evaluated model.
    \item Perturbed Accuracy (PA): accuracy on the images that our generation process successfully produces a perturbed image. 
    \item Foundation Model-oriented Robustness (FMR): robustness of the model compared with the foundation model.
\end{itemize}
\subsection{Robustness Evaluation for Standard Vision Models}

\label{sec:standard}

We consider a large range of models~(Appedix~\ref{sec:modellist}) and evaluate pre-trained variants of a LeNet architecture \citep{lecun1998gradient} for the MNIST experiment and ResNet architecture \citep{he2016deep} for the remaining experiments. For ImageNet experiment, we also consider pretrained transformer variants of ViT~\citep{dosovitskiy2020image}, Swin~\citep{liu2021swin}, Twins~\citep{chu2021twins}, Visformer~\citep{chen2021visformer} and DeiT~\citep{touvron2021training} from the \textit{timm} library~\citep{rw2019timm}. We evaluate the recent ConvNeXt~\citep{liu2022convnet} as well. All models are trained on the ILSVRC2012 subset of IN comprised of 1.2 million images in the training and a total of 1000 classes~\citep{deng2009imagenet,russakovsky2015imagenet}.

\begin{wraptable}{R}{0.6\textwidth}
\small 
\footnotesize
\centering
\caption{
The robustness test of standard models. We note 1) there exists a performance gap between standard models and the foundation model, and 2) transformer-variants outperform the vanilla ResNet in terms of FMR.}
\begin{tabular}{ccccc}
\hline
\textbf{Data} & \textbf{Model} & \textbf{SA}& \textbf{PA}  & \textbf{FMR}  \\ \hline
{MNIST} & LeNet & {99.09} & 24.76 & 24.99  \\ \hline
{CIFAR10} & ResNet18 & {95.38} & 49.30 & 51.69   \\\hline
{9-class IN} & ResNet18 & {92.30} & 24.89 & 26.97  \\\hline
\multirow{7}{*}{ImageNet} & ResNet50 & {76.26} & 30.59 & 40.11  \\

& ViT & {82.40} & 39.57 & 48.02 \\
& DeiT & {78.57} & 41.18 & 52.41 \\
& Twins & {80.53} & 46.47 & 57.71 \\
& Visformer & {79.88} & 45.71 & 57.22 \\
& Swin & {81.67} & 54.93 & 67.26 \\
& ConvNeXt & {82.05} & 45.44 & 55.38 \\\hline
\end{tabular}
\label{tab:result:main}
\end{wraptable}



We report our results in Table~\ref{tab:result:main}. As expected, these models can barely maintain their performances when tested on data from different distributions, as shown by many previous works \citep[\textit{e.g.,}][]{geirhos2018imagenettrained,hendrycks2019benchmarking}. 

Interestingly, on ImageNet, though both transformer-variants models and vanilla CNN-architecture model, \textit{i.e.,} ResNet, attain similar clean image accuracy, transformer-variants substantially outperform ResNet50 in terms of FMR under our dynamic evaluation protocol. We conjecture such a performance gap partly originated from the differences in training setups; more specifically, it may be resulted by the fact transformer-variants by default use strong data augmentation strategies while ResNet50 use none of them. The augmentation strategies (\textit{e.g.,} Mixup~\citep{zhang2017mixup}, Cutmix~\citep{yun2019cutmix} and Random Erasing~\citep{zhong2020random}, \textit{etc.}) already naively introduce OOD samples during training, therefore are potentially helpful for securing model robustness towards data shifts. When equiping with the similar data augmentation strategies, CNN-architecture model, \textit{i.e.,} ConvNext, has achieved comparable performance in terms of FMR. This hypothesis has also been verified in recent works~\citep{bai2021transformers,wang2022can}. We will offer more discussions on the robustness enhancing methods in Section~\ref{sec:robust}. 

Surprisingly, we notice a large difference within the transformer family in the proposed FMR metric. Despite Swin Transformer's suboptimal accuracy on the standard dataset, it achieves the best FMR. One possible reason for this phenomenon may due to their internal architecture that are related to the self-attention mechanism. Therefore, we conduct in-depth analysis on the effects of head numbers. The results reveal that increased head numbers enhance expressivity and robustness, albeit at the expense of clean accuracy. More details can be found in Appendix~\ref{sec:multi-head}.

Besides comparing performance between different standard models, FMR brings us the chance to directly compare models with the foundation model. Across all of our experiments, the FMR shows the significant gap between models and the foundation model, which is trained on the unfiltered and highly varied data, seemingly suggesting that training with a more diverse dataset would help with robustness. This overarching trend has also been identified in~\citep{taori2020measuring}. However, quantifying when and why training with more data helps is still an interesting open question.

\subsection{Robustness Evaluation for Robust Vision Models}
\label{sec:robust}

Recently, some techniques have been introduced to cope with corruptions or style shifts. To investigate whether those OOD robust models can maintain the performance in our dynamic evaluation, we evaluate the pretrained ResNet50 models combining with the five leading methods from the ImageNet-C leaderboard, namely, Stylized ImageNet training (SIN;~\citep{geirhos2018imagenettrained}), adversarial
noise training (ANT;~\citep{rusak2020increasing}), a combination of ANT and SIN (ANT+SIN;~\citep{rusak2020increasing}), optimized data augmentation
using Augmix (AugMix;~\citep{hendrycks2019augmix}), DeepAugment (DeepAug;~\citep{hendrycks2021many}), a combination of Augmix and DeepAug (DeepAug+AM;~\citep{hendrycks2021many}) and Discrete Adversarial Training (DAT;~\citep{mao2022enhance}).

\begin{wraptable}{R}{0.6\textwidth}
\footnotesize
\centering 
\caption{The robustness test of generated countefactual images for OOD robust models. $\rm SA^{*}$ 
represents the model's top-1 accuracy on ImageNet-C dataset. We note that applying DAT yields the best FMR under our dynamic evaluation protocol.}
\begin{tabular}{ccccc}
\hline
\textbf{Model} & \textbf{SA} & $\rm \mathbf{SA^{*}}$ & \textbf{PA}  & \textbf{FMR}  \\ \hline
ResNet50 & {76.26} & 39.20 & 30.59 & 40.11 \\ \hline
ANT & 76.26 & 50.41 & 30.61 & 40.14  \\
SIN & 76.24 & 45.19 & 30.55 & 40.07  \\
ANT+SIN & 76.26 & 52.60 & 31.09 & 40.77  \\
DeepAug & 76.26 & 52.60 & 33.24 & 43.59  \\
Augmix & 76.73 & 48.31 & 38.89 & 50.68  \\
DeepAug+AM & 76.68 & 58.10 & 42.60 & 55.56  \\ 
DAT & 76.57 & 68.00 & 52.57 & 68.66  \\ 
\hline
\end{tabular}
\label{tab:result:defense}
\end{wraptable}

The results are displayed in Table~\ref{tab:result:defense}. Surprisingly, we find that some common corruption robust models, \textit{i.e.,} SIN, ANT, ANT+SIN, fail to maintain their power under our dynamic evaluation protocol. We take the SIN method as an example. The FMR of SIN method is $40.07$, which is even lower than that of a vanilla ResNet50. 
These methods are well fitted in the benchmark ImageNet-C, verifying the weakness of relying on fixed benchmarks to rank methods. The selected best method may not be a true reflection of the real world, but a model well fits certain datasets, which in turn proves the necessity of our dynamic evaluation protocol.

DeepAug, Augmix, DeepAug+AM perform better than SIN and ANT methods in terms of FMR as they dynamically perturb the datasets, alleviating the hazards of ``model selection with test set'' to some extent. DAT outperform others in terms of FMR, which validates the effectiveness of perturbation in the meaningful symbolic space rather than the continuous pixel space.
However, their performance is limited by the variations of the perturbations allowed, resulting in marginal improvements compared with ResNet50.

\begin{figure*}[t]
    \centering
    
        \begin{subfigure}{0.11\textwidth}
            \centering
            \includegraphics[width=\textwidth]{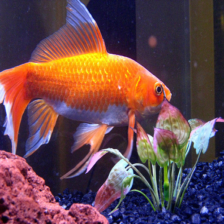} 
            \caption{gold fish} 
            
        \end{subfigure}
        \begin{subfigure}{0.11\textwidth}
            \centering
            \includegraphics[width=\textwidth]{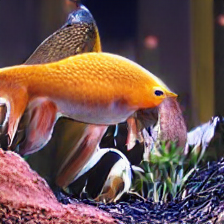} 
            \caption{snoek} 
            \label{fig:sin}
            
        \end{subfigure}
        \begin{subfigure}{0.11\textwidth}
            \centering
            \includegraphics[width=\textwidth]{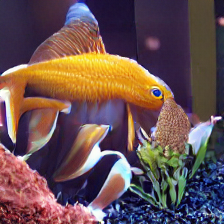} 
            \caption{anemone} 
        \end{subfigure}
        \begin{subfigure}{0.11\textwidth}
            \centering
            \includegraphics[width=\textwidth]{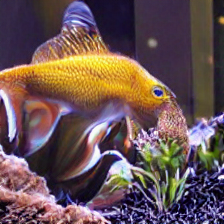} 
            \caption{snoek} 
            \label{fig:antsin}
        \end{subfigure}
        \begin{subfigure}{0.11\textwidth}
            \centering
            \includegraphics[width=\textwidth]{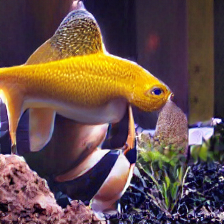} 
            \caption{coralreef} 
        \end{subfigure}
        \begin{subfigure}{0.11\textwidth}
            \centering
            \includegraphics[width=\textwidth]{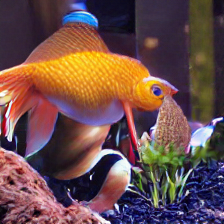} 
            \caption{hen} 
            \label{fig:deepaug}
        \end{subfigure}
        \begin{subfigure}{0.11\textwidth}
            \centering
            \includegraphics[width=\textwidth]{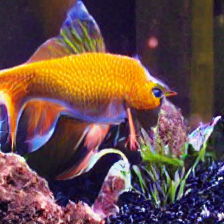} 
            \caption{snoek}
            \label{fig:deepaug+am}
        \end{subfigure}
        \begin{subfigure}{0.11\textwidth}
            \centering
            \includegraphics[width=\textwidth]{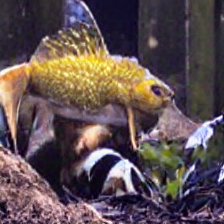} 
            \caption{badger}
            \label{fig:deepaug+am}
        \end{subfigure}
        \\
        \begin{subfigure}{0.11\textwidth}
            \centering
            \includegraphics[width=\textwidth]{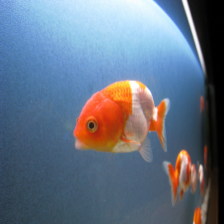} 
            \caption{gold fish} 
        \end{subfigure}
        \begin{subfigure}{0.11\textwidth}
            \centering
            \includegraphics[width=\textwidth]{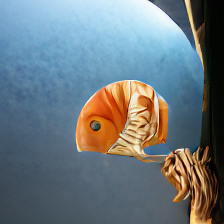} 
            \caption{nautilus} 
            \label{fig:2sin}
        \end{subfigure}
        \begin{subfigure}{0.11\textwidth}
            \centering
            \includegraphics[width=\textwidth]{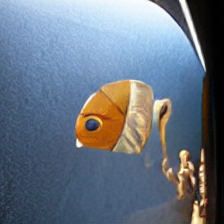}
            \caption{lamp} 
        \end{subfigure}
        \begin{subfigure}{0.11\textwidth}
            \centering
            \includegraphics[width=\textwidth]{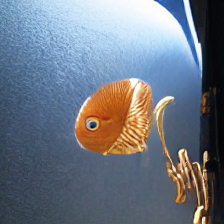} 
            \caption{nautilus} 
            \label{fig:2antsin}
        \end{subfigure}
        \begin{subfigure}{0.11\textwidth}
            \centering
            \includegraphics[width=\textwidth]{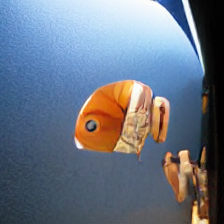} 
            \caption{lamp} 
        \end{subfigure}
        \begin{subfigure}{0.11\textwidth}
            \centering
            \includegraphics[width=\textwidth]{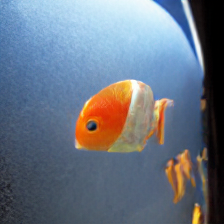} 
            \caption{anemone} 
        \end{subfigure}
        \begin{subfigure}{0.11\textwidth}
            \centering
            \includegraphics[width=\textwidth]{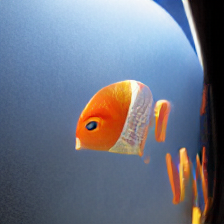}
            \caption{coralreef} 
        \end{subfigure}
        \begin{subfigure}{0.11\textwidth}
            \centering
            \includegraphics[width=\textwidth]{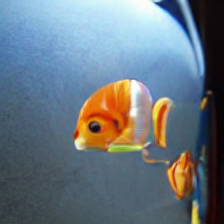}
            \caption{milk can} 
        \end{subfigure}
\caption{Visualization of the images generated by our system in evaluating the common corruption robust model, with the original image shown (left image of each row). The caption for each image is either the original label or the predicted label by the corresponding model. The evaluated models are SIN, ANT, ANT+SIN, Augmix, DeepAug, DeepAug+AM and DAT from left to right.}
\label{fig:style-robust}
\end{figure*}

Besides, we visualize the perturbed images generated
according to the evaluated style-shift robust models in Figure~\ref{fig:style-robust}. More results and the discussion on the realism of the generated images are shown in Appendix~\ref{sec:moresample} and~\ref{sec:realism}. We have the following observations:

\textbf{Preservation of Local Textual Details.} 
Recent studies highlight that CNNs often prioritize object textures over shapes for learning~\citep{gatys2015texture,ballester2016performance,gatys2017texture,geirhos2018imagenettrained,wang2020high}. Our perturbed images retain misleading textures, making evaluation more challenging as textures become a nuisance rather than predictive. For instance, in Figure~\ref{fig:deepaug}, we generated images with skin textures resembling chicken skin, which misleads the ResNet with DeepAug method.


\textbf{Generalization to Shape Perturbations.} Our attack dynamically adjusts intensity using the model's gradient, affecting both texture and shape while preserving image-label structures. This results in successful model attacks with shape-perturbed images, as demonstrated in the SIN (Figure~\ref{fig:sin} and Figure~\ref{fig:2sin}) and ANT+SIN (Figure~\ref{fig:antsin} and Figure~\ref{fig:2antsin}) examples.



\textbf{Recognition of Model Properties.} By integrating various methods, we generate more complex perturbed images, such as those combining DeepAug's chicken-like head with Augmix's skin patterns (Figure~\ref{fig:deepaug+am}), demonstrating our method's ability to adapt to model properties for challenging evaluations. This shows our protocol dynamically tailors attacks to model specifics, producing perturbed images that reveal weaknesses beyond static benchmarks, \textit{i.e.,} ImageNet-C.

\begin{table*}[t]
\Large
\centering
  \caption{Study of different image generator choices on ImageNet dataset. The numbers of PA and FMR are reported. The results are consistent under different image generator configurations.}
  \footnotesize
  \label{tab:ablation}
  \resizebox{.8\linewidth}{!}{
    

\begin{tabular}{c|cc|cc|cc|cc|cc}
    \toprule
    \multirow{2}{*}{Model} & \multicolumn{2}{c}{ADM} & \multicolumn{2}{c}{Improved DDPM} & \multicolumn{2}{c}{Efficient-VDVAE} & \multicolumn{2}{c}{StyleGAN-XL}
    & \multicolumn{2}{c}{VQGAN}\\
               ~&PA & FMR & PA & FMR & PA & FMR & PA & FMR & PA &  FMR \\
    \midrule
    ResNet50   & 32.36 & 42.43 & 31.43 & 41.21 & 30.28 & 39.71 & 31.65 & 41.50 &  32.09 & 42.08  \\
    ANT & 31.88 & 41.80 & 32.54 & 42.67 & 31.29 & 41.03 & 31.94 & 41.88 &  31.65 & 41.50 \\
    
    SIN & 32.17 & 42.20 & 32.05 & 42.04 & 31.15 & 40.86 & 31.39 & 41.17 &  31.64 & 41.50  \\
    ANT+SIN & 32.47 & 42.58 & 33.50 & 43.93 & 32.68 & 42.85 & 33.01 & 43.29 & 32.15 & 42.16  \\

    DeepAug & 33.32 & 43.69 & 34.39 & 45.10 & 33.83 & 44.36 & 34.46 & 45.19  & 34.30 & 44.98  \\
    Augmix & 39.47 & 51.44 & 40.16 & 52.34 & 39.95 & 52.07 & 39.30 & 51.22 & 40.01 & 52.14 \\
    DeepAug+AM & 43.43 & 56.64 & 43.17 & 56.30 & 41.32 & 53.89 & 41.71 & 54.39 & 43.65 & 56.92 \\
    \bottomrule
  \end{tabular}
  }
\end{table*}

\subsection{Understanding the Properties of Our Evaluation System}

We continue to investigate several properties of the models 
in the next couple sections. To save space, we will mainly present the results on CIFAR10 experiment here and save the details to the appendix:
\begin{itemize}[leftmargin=*,noitemsep,topsep=0pt]
    \item In Appendix~\ref{sec:transfer}, we explore the transferability of the generated images and validate the reliability of the FMR metric. The results of a reasonable transferability suggest that our method of generating images is not model-specificity, and
can be potentially used in a broader scope: 
we can leverage the method to generate a static set of images and set a benchmark to help the development of robustness methods. 
    \item In Appendix~\ref{sec:pgd}, we compare the vanilla model to a model trained by PGD~\citep{madry2017towards}. We find that these two models process the data differently. However, their robustness weaknesses are exposed to a similar degree by our test system.
    \item In Appendix~\ref{sec:advt}, we investigate enhancing evaluated robustness by training the model with images generated by our evaluation system. Due to computational constraints, we use a static image set for training, which indeed improves model robustness in our system.
    \item We also notice that 
the generated images tend to shift the color of the original images, so we tested the robustness of grayscale models in Appendix~\ref{sec:greyscale}, 
the results suggest removing the color channel will not 
improve robustness performances. 
\end{itemize}

\subsection{Experiments Regarding Method Configuration}
\label{sec:ss}


\textbf{Generator Configuration.} We conduct ablation study on the generator choice to agree on the performance ranking in Table~\ref{tab:result:main} and Table~\ref{tab:result:defense}. We consider several image generator architechitures, namely, variational autoencoder (VAE)~\citep{kingma2013auto, rezende2014stochastic} like Efficient-VDVAE~\citep{hazami2022efficient}, diffusion models~\citep{sohl2015deep} like Improved DDPM~\citep{nichol2021improved} and ADM~\citep{dhariwal2021diffusion}, and GAN like StyleGAN-XL~\citep{sauer2022stylegan}. As shown in Table~\ref{tab:ablation}, we find that the conclusion is consistent under different generator choices, which validates the correctness of our conclusions in Section~\ref{sec:standard} and Section~\ref{sec:robust}.

\textbf{Sparse VQGAN.} In resource-constrained scenarios, we enhance efficiency by sparsifying VQGAN, discovering that only 0.69\% of dimensions significantly impact style. By masking the remaining 99.31\%, we create a sparse VQGAN submodel, reducing runtime by 12.7\% on 9-class ImageNet and 28.5\% on ImageNet, making our protocol viable even with limited computing resources. Further details are in the Appendix~\ref{sec:sparsevqgan}.


\textbf{Step Size.} The optimal step size varies with the computation budget (B). Under limited budgets, a larger step size is necessary but may highlight model limitations, affecting evaluation outcomes. With ample B, a smaller step size can alleviate these issues, proving sufficient for practical applications, as detailed in Appendix~\ref{sec:9classdetail}.

\section{Discussion}
\label{sec:discussion}
\label{sec:discuss}
\subsection{Discussion on the Bias Issues}
\textbf{Selection Bias.} 
In previous sections, we have mentioned that ranking models based on the fixed datasets will potentially lead to the selection bias of methods~\citep{duda1973pattern,friedman2001elements}. While our dynamic evaluation protocol help mitigates this issue, it is inevitable to introduce other biases when we select specific generators and foundation models. Here, we provide more analyses and discussions.

\textbf{Bias towards Generators.}
As our evaluation protocol requires an image generator, the quality or diversity of the generated images may be bounded by the choice of generator. However, Table~\ref{tab:ablation} shows the consistent conclusions made in the paper, which verifies that the proposed method is robust to the choice of generator.

\textbf{Bias towards the Foundation Models.} In Appendix~\ref{sec:9classdetail}, we take the CLIP model as an example and explore the category unbalance issue. We observe its performance is affected by imbalanced online sample distributions, leading to perturbed images of varied difficulty. Fortunately, our model configurations significantly mitigate this issue (see Appendix~\ref{sec:9classdetail}), proving effective for real-world use. Additionally, using an ensemble of foundation models enhances this mitigation.


\textbf{Bias of the Metric.}
As the generated samples are biased to the zoo of foundation models' zero-shot performance, "PA" and "FMR" scores will also be biased. In Appendix~\ref{sec:distesti}, we conduct a theoretical analysis to guarantee the correctness of the proposed method. Our theoretical analysis confirms that while both traditional datasets and foundation model zoos can approximate true distributions, the latter achieves this with less variance. Hence, we advocate that bias towards foundation model zoos is preferable to conventional datasets.

\textbf{Potential Negative Impacts.} Although the bias incurred by the zoo of foundation models is less detrimental than the biases arisen from fixed benchmark datasets, a more detailed discussion on the potential negative impacts is necessary. Therefore, we discuss the potential negative impacts as well as the societal bias of relying on large models in Appendix~\ref{sec:societal_bias}.

\subsection{Discussion on the Effects of foundation model's Zero-shot Performance}

\textbf{Domain Gap Concerns.} Despite the zero-shot strengths of our foundation model zoo, it may underperform in niche areas, \textit{e.g.,} medical imaging, where general knowledge falls short. However, our framework's adaptable design allows for the easy inclusion of domain-specific pre-trained models, providing a versatile solution for a wide range of applications.

\textbf{Zero-shot Adversarial Robustness Concerns.} 

Foundation models like CLIP are vulnerable to adversarial attacks~\citep{mao2022understanding}, potentially undermining evaluation effectiveness if attackers access and manipulate CLIP's weights. In Appendix~\ref{zero-shot-adv}, our study into CLIP's zero-shot adversarial robustness reveals it outperforms XCiT-L12~\citep{debenedetti2022light} in resilience against FGSM attacks, despite susceptibility to classification changes. However, gradient masking and other simple techniques can safeguard CLIP in production, significantly reducing white-box attack risks. For black-box attacks, CLIP demonstrates strong resilience (See Appendix~\ref{zero-shot-adv}). By integrating a robust model ensemble and employing a majority vote for image-label validation, our approach enhances security. Therefore, CLIP, particularly when safeguarded by weight and gradient protection techniques and supported by a robust model ensemble, stands as a strong candidate for the ideal foundation model to preserve image-label integrity currently.

\section{Conclusion}
\label{sec:con}
In this paper, 
we first summarized the common practices 
of model evaluation strategies 
for robust vision machine learning. 
We then discussed three desiderata 
of the robustness evaluation protocol. 
Further, we offered a simple method 
that can fulfill these three desiderata
at the same time, 
serving the purpose of evaluating vision models' robustness 
across generic \textit{i.i.d} benchmarks, 
without requirement on the prior knowledge 
of the underlying image-label structure depicted by the images,
although relying on a zoo of foundation models. 







\bibliography{iclr2024_conference}
\bibliographystyle{iclr2024_conference}

\newpage
\appendix
\onecolumn

\section{Distribution Estimation}
\label{sec:distesti}
\subsection{Problem Formulation}

Given an unknown ground truth distribution: 
\begin{equation}
    \mathbf{P}= \textnormal{Unknown}(\mu,\Sigma)
\end{equation}
where $\mu \in \mathbb{R}^{p}$ and $\Sigma \in \mathbb{R}^{p\times p}$.

All the samples in our study are sampled from this distribution. 

We use $\X_k$ to denote the $k$\textsuperscript{th} dataset, with $n_k$ samples, and we use $\x_{k, i}$ to denote the $i$\textsuperscript{th} sample in it. 

We aim to consider the estimation 
of $\mu$ from two different models. 
The conventional smaller model which operates on 
only one dataset, 
and WLOG, we assume the smaller model works on $\X_0$;
and the bigger, zoo of CLIP-style models, 
which operates on a collection of datasets, 
we say it works on $m$ datasets, 
i.e., $\{\X_1, \X_2, \X_3, \dots, \X_m \}$, 
we will compare 
$\mathbb{E}[\widehat{\mu_0} - \mu]$ and $\mathbb{E}[\widehat{\mu_\textnormal{CLIP}} - \mu]$,  $\textnormal{VAR}(\widehat{\mu_0})$ and $\textnormal{VAR}(\widehat{\mu_\textnormal{CLIP}})$, $\mathbb{E}[\widehat{\Sigma_0} - \Sigma]$ and $\mathbb{E}[\widehat{\Sigma_\textnormal{CLIP}} - \Sigma]$ and $\textnormal{VAR}(\widehat{\Sigma_0})$ and $\textnormal{VAR}(\widehat{\Sigma_\textnormal{CLIP}})$.

\paragraph{Assumption I}
Due to dataset collection bias, we assume that, 
while all the data are sampled with the fixed distribution above, 
the bias of dataset collection will introduce a bias in 
the estimation of the true parameter $\mu$,
therefore
\begin{align}
    \widehat{\mu_i} = \mu + \epsilon_i
\end{align}
where 
\begin{align}
    \widehat{\mu_i} := \dfrac{1}{n_i}\sum^{n_i}_{j} \x_{i,j}
\end{align}
and 
\begin{align}
    \epsilon_i \sim N(\mathbf{0}, \mathbf{I})
\end{align}

\paragraph{Assumption II} 
\label{assp:2}
Due to dataset collection bias, we assume that, 
while all the data are sampled with the fixed distribution above, 
the bias of dataset collection will introduce a bias in 
the estimation of the true parameter $\Sigma$,
therefore
\begin{align}
    \widehat{\Sigma_i} = \epsilon^{'}_{i}\Sigma
\end{align}
where
\begin{align}
\widehat{\Sigma_i} := \dfrac{1}{n_i}\sum^{n_i}_j [(\x_{i,j}-\widehat{\mu_i})^T(\x_{i,j}-\widehat{\mu_i})] 
\end{align}
and
\begin{align}
    \epsilon^{'}_{i} \sim \textnormal{Exp}(\mathbf{1}),
\end{align}

\begin{proposition}
Under Assumptions I and II, we have estimators
\begin{align*}
    &\mathbb{E}[\widehat{\mu_\textnormal{CLIP}} - \mu] = \mathbb{E}[\widehat{\mu_0} - \mu], 
    &\mathbb{E}[\widehat{\Sigma_\textnormal{CLIP}} - \Sigma]
    = \mathbb{E}[\widehat{\Sigma_0} - \Sigma]\\
    &\textnormal{VAR}(\widehat{\mu_\textnormal{CLIP}}) \leq \textnormal{VAR}(\widehat{\mu_0}), 
    &\textnormal{VAR}(\widehat{\Sigma_\textnormal{CLIP}}) \leq
    \textnormal{VAR}(\widehat{\Sigma_0})
\end{align*}
where $\leq$ holds element-wise. 
\end{proposition}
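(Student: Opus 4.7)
The plan is to first pin down what $\widehat{\mu_\textnormal{CLIP}}$ and $\widehat{\Sigma_\textnormal{CLIP}}$ should mean: since CLIP operates over the collection $\{\X_1,\dots,\X_m\}$, I will take the natural pooled estimators
\begin{equation*}
\widehat{\mu_\textnormal{CLIP}} = \tfrac{1}{m}\sum_{i=1}^m \widehat{\mu_i}, \qquad \widehat{\Sigma_\textnormal{CLIP}} = \tfrac{1}{m}\sum_{i=1}^m \widehat{\Sigma_i},
\end{equation*}
treating each per-dataset bias $\epsilon_i$ (resp.\ scaling $\epsilon'_i$) as independent across the $m$ datasets. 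The baseline estimators $\widehat{\mu_0},\widehat{\Sigma_0}$ come from the single dataset $\X_0$. With these identifications, all four claims reduce to elementary moment computations for Gaussian and Exponential random variables.

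For the two expectation statements, I would proceed by direct computation. Under Assumption I, $\mathbb{E}[\widehat{\mu_i}-\mu]=\mathbb{E}[\epsilon_i]=\mathbf{0}$ for every $i$, so linearity of expectation gives $\mathbb{E}[\widehat{\mu_\textnormal{CLIP}}-\mu]=\tfrac{1}{m}\sum_i \mathbb{E}[\epsilon_i]=\mathbf{0}=\mathbb{E}[\widehat{\mu_0}-\mu]$. Under Assumption II, since $\epsilon'_i\sim\mathrm{Exp}(1)$ has mean $1$, we get $\mathbb{E}[\widehat{\Sigma_i}-\Sigma]=(\mathbb{E}[\epsilon'_i]-1)\Sigma=\mathbf{0}$, and averaging over $i$ gives the matching CLIP identity. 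Both equalities of the first line are then established.

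For the two variance statements, I would again rely on the per-dataset noise being i.i.d.\ across $i$. Elementwise,
\begin{equation*}
\mathrm{VAR}(\widehat{\mu_0})_{j} = \mathrm{VAR}(\epsilon_{0,j}) = 1, \qquad \mathrm{VAR}(\widehat{\mu_\textnormal{CLIP}})_{j} = \tfrac{1}{m^2}\sum_{i=1}^m \mathrm{VAR}(\epsilon_{i,j}) = \tfrac{1}{m},
\end{equation*}
and similarly $\mathrm{VAR}(\widehat{\Sigma_0})_{jk}=\Sigma_{jk}^2\cdot\mathrm{VAR}(\epsilon'_0)=\Sigma_{jk}^2$, while $\mathrm{VAR}(\widehat{\Sigma_\textnormal{CLIP}})_{jk}=\tfrac{1}{m^2}\sum_{i=1}^m \Sigma_{jk}^2 = \tfrac{1}{m}\Sigma_{jk}^2$. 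Since $m\geq 1$, the desired elementwise inequalities $\mathrm{VAR}(\widehat{\mu_\textnormal{CLIP}})\leq \mathrm{VAR}(\widehat{\mu_0})$ and $\mathrm{VAR}(\widehat{\Sigma_\textnormal{CLIP}})\leq \mathrm{VAR}(\widehat{\Sigma_0})$ both drop out immediately.

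The main conceptual obstacle is really the first step: the statement as written does not spell out what $\widehat{\mu_\textnormal{CLIP}}$ and $\widehat{\Sigma_\textnormal{CLIP}}$ are, so I have to argue that a uniform average is the canonical (and fairest) pooled estimator when the $m$ datasets enter symmetrically, and to justify the cross-dataset independence of the biases $\epsilon_i,\epsilon'_i$ implicit in the i.i.d.\ phrasing of Assumptions~I and~II. Once that modeling choice is fixed, the remaining computations are routine; a brief remark could also note that if one allowed unequal sample sizes $n_i$ or correlated biases, the variance inequality would still hold but with a sharper constant depending on the effective sample size.
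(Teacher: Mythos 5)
Your proof is correct, but it makes a different modeling choice for the CLIP estimator than the paper does, and the difference matters for how the argument proceeds. You define $\widehat{\mu_\textnormal{CLIP}}$ and $\widehat{\Sigma_\textnormal{CLIP}}$ as \emph{unweighted} averages $\tfrac{1}{m}\sum_i \widehat{\mu_i}$ and $\tfrac{1}{m}\sum_i \widehat{\Sigma_i}$ over the $m$ datasets, which makes the variance reduction immediate and quantitative: $\mathrm{VAR}(\widehat{\mu_\textnormal{CLIP}})=\tfrac{1}{m}\mathbf{I}$ and $\mathrm{VAR}(\widehat{\Sigma_\textnormal{CLIP}})_{jk}=\tfrac{1}{m}\Sigma_{jk}^2$, both strictly tighter than the paper's bounds whenever $m>1$. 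The paper instead pools all raw samples, i.e.\ uses the sample-size-weighted estimator $\widehat{\mu_\textnormal{CLIP}}=\tfrac{\sum_i n_i\widehat{\mu_i}}{\sum_i n_i}$ (so $\epsilon_\textnormal{CLIP}=\tfrac{\sum_i n_i\epsilon_i}{\sum_i n_i}$), and then appeals to $\sum_i n_i^2\leq(\sum_i n_i)^2$ to obtain only $\mathrm{VAR}(\widehat{\mu_\textnormal{CLIP}})\leq\mathbf{I}$; its $\Sigma$-variance argument is considerably more involved (bounding $\mathbb{E}[\epsilon_\textnormal{CLIP}^4]$ via $\mathbb{E}[(\sum_i\epsilon'_i)^2]=m+m^2$). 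Your route is more elementary and buys a sharper constant; the paper's route tracks the more realistic pooled estimator but pays in a weaker bound and a lengthier chain of inequalities. One caveat: since the Proposition never pins down the form of the CLIP estimator, you are right that this ambiguity is the crux, and you should state your choice as an explicit definition rather than an argument from symmetry, and likewise state cross-dataset independence of $\epsilon_i$ and $\epsilon'_i$ as an explicit additional assumption, since Assumptions I and II as written only specify the marginal laws.
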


\begin{proof}
\textbf{Estimation of $\mu$.}
Under Assumptions I and II, we have
\begin{align}
    \widehat{\mu_0} = \mu + \epsilon_0
\end{align}

We can obtain $\mathbb{E}[\widehat{\mu_0} - \mu]$ 
and $\textnormal{VAR}(\widehat{\mu_0})$
by marginalizing out the randomness introduced by $\epsilon$:

\begin{align}
    \mathbb{E}[\widehat{\mu_0} - \mu] = 
    \mathbb{E}[\mu + \epsilon_0-\mu] = \mathbb{E}[\epsilon_0] = \mathbf{0}. 
\end{align}

\begin{align}
    \textnormal{VAR}(\widehat{\mu_0}) &= \mathbb{E}[\widehat{\mu_0}^2]-
    \mathbb{E}^2[\widehat{\mu_0}]  \nonumber \\
    & = \mathbb{E}[(\mu + \epsilon_0)^2] - \mathbb{E}^{2}[(\mu + \epsilon_0)]  \nonumber \\
     & = \mathbb{E}[\mu^2 + 2\mu\epsilon_0 + \epsilon_0^2] - (\mu+\mathbb{E}[\epsilon_0])^2 \nonumber \\
    & = \mathbb{E}[\epsilon_0^2] - \mathbb{E}^2[\epsilon_0]    \nonumber \\
    & = \textnormal{VAR}(\epsilon_0) \nonumber \\
    & = \mathbf{I}
\end{align}

For $\mathbb{E}[\widehat{\mu_\textnormal{CLIP}} - \mu]$ and $\textnormal{VAR}(\widehat{\mu_\textnormal{CLIP}})$, we have:

\begin{align}
    \mathbb{E}[\widehat{\mu_\textnormal{CLIP}} - \mu] = 
    \mathbb{E}[\dfrac{\sum_i^m\epsilon_in_i}{\sum_i^mn_i}] 
    = \dfrac{\sum_i^m\mathbb{E}[\epsilon_i]n_i}{\sum_i^mn_i}
    = \mathbf{0}.
\end{align}

and

\begin{align}
    \textnormal{VAR}(\widehat{\mu_\textnormal{CLIP}}) &= \mathbb{E}[\widehat{\mu_\textnormal{CLIP}}^2]-
    \mathbb{E}^2[\widehat{\mu_\textnormal{CLIP}}]  \nonumber \\
    & = \mathbb{E}[(\mu + \epsilon_\textnormal{CLIP})^2] - \mathbb{E}^{2}[(\mu + \epsilon_\textnormal{CLIP})] \nonumber  \\
     & = \mathbb{E}[\mu^2 + 2\mu\epsilon_\textnormal{CLIP} + \epsilon_\textnormal{CLIP}^2] - (\mu+\mathbb{E}[\epsilon_\textnormal{CLIP}])^2 \nonumber \\
    & = \mathbb{E}[\epsilon_\textnormal{CLIP}^2] - \mathbb{E}^2[\epsilon_\textnormal{CLIP}] 
\end{align}
Since $\mathbb{E}[\epsilon_\textnormal{CLIP}]=\mathbb{E}[\widehat{\mu_\textnormal{CLIP}} - \mu]=0$, we have:
\begin{align}
\textnormal{VAR}(\widehat{\mu_\textnormal{CLIP}})=\mathbb{E}[\epsilon_\textnormal{CLIP}^2]=\mathbb{E}[(\dfrac{\sum_i^m\epsilon_in_i}{\sum_i^mn_i})^2]
\end{align}
When we expand the square of sum, we will get the many squared terms (which are left finally) and many more that involves at least one $\mathbb{E}[\epsilon_i]\z$, where $\z$ can be any arbitrary stuff, and then since $\mathbb{E}[\epsilon_i] = \mathbf{0}$, $\z$ won't matter. Therefore, we have:
\begin{align}
\textnormal{VAR}(\widehat{\mu_\textnormal{CLIP}}) =
    \mathbb{E}[(\dfrac{\sum_i^m\epsilon_in_i}{\sum_i^mn_i})^2] 
    = \dfrac{\sum_i^mn_i^2\mathbb{E}[\epsilon_i^2]}{(\sum_i^mn_i)^2}
\end{align} 
Since $n_i \geq 1$ for $i=1,2,...,m$, we have $\sum_i^mn_i^2\leq (\sum_i^mn_i)^2$. 

Therefore,
\begin{align}
    \textnormal{VAR}(\widehat{\mu_\textnormal{CLIP}}) = \dfrac{\sum_i^mn_i^2\mathbb{E}[\epsilon_i^2]}{(\sum_i^mn_i)^2} \leq \dfrac{(\sum_i^mn_i)^2\mathbb{E}[\epsilon_i^2]}{(\sum_i^mn_i)^2} = \mathbb{E}[\epsilon_i^2]= \mathbf{I}
\end{align}

\textbf{Estimation of $\Sigma$.}
We can obtain $\mathbb{E}[\widehat{\Sigma_0} - \Sigma]$ 
and $\textnormal{VAR}(\widehat{\Sigma_0})$
by marginalizing out the randomness introduced by $\epsilon^{'}$:
\begin{align}
    \mathbb{E}[\widehat{\Sigma_0} - \Sigma] = 
    \mathbb{E}[\epsilon^{'}_{0}\Sigma -\Sigma] = \mathbb{E}[(\epsilon^{'}_{0}-1)\Sigma]=\mathbb{E}[\epsilon^{'}_{0}-1]\mathbb{E}[\Sigma] = \mathbf{0}. 
\end{align}

\begin{align}
    \textnormal{VAR}(\widehat{\Sigma_0}) &= \mathbb{E}[\widehat{\Sigma_0}^2]-
    \mathbb{E}^2[\widehat{\Sigma_0}]  \nonumber \\
    & = \mathbb{E}[(\epsilon^{'}_{0}\Sigma)^2] - \mathbb{E}^{2}[\epsilon^{'}_{0}\Sigma]  \nonumber \\
     & = \mathbb{E}[\epsilon^{'2}_{0}\Sigma^2] - \mathbb{E}^2[\epsilon^{'}_{0}]\mathbb{E}^2[\Sigma] \nonumber \\
    & = \mathbb{E}[\epsilon^{'2}_{0}]\mathbb{E}[\Sigma^2] - \mathbb{E}^2[\epsilon^{'}_{0}]\mathbb{E}^2[\Sigma]  \nonumber \\
    & = 2\mathbb{E}[\Sigma^2]-\mathbb{E}^2[\Sigma]  \nonumber \\
    & = 2\Sigma^2-\Sigma^2 \nonumber \\
    & = \Sigma^2
\end{align}
For $\mathbb{E}[\widehat{\Sigma_\textnormal{CLIP}} - \Sigma]$ and $\textnormal{VAR}(\widehat{\Sigma_\textnormal{CLIP}})$, we have:
\begin{align}
    \mathbb{E}[\widehat{\Sigma_\textnormal{CLIP}} - \Sigma] = 
    \mathbb{E}[\epsilon^{'}_\textnormal{CLIP}\Sigma -\Sigma] = \mathbb{E}[(\epsilon^{'}_\textnormal{CLIP}-1)\Sigma]=\mathbb{E}[\epsilon^{'}_\textnormal{CLIP}-1]\mathbb{E}[\Sigma] = \mathbf{0}. 
\end{align}
\begin{align}
\label{eq:varclip}
    \textnormal{VAR}(\widehat{\Sigma_\textnormal{CLIP}}) &= \mathbb{E}[\widehat{\Sigma_\textnormal{CLIP}}^2]-
    \mathbb{E}^2[\widehat{\Sigma_\textnormal{CLIP}}] \nonumber  \\
    & = \mathbb{E}[(\epsilon^{'}_\textnormal{CLIP}\Sigma)^2] - \mathbb{E}^{2}[\epsilon^{'}_\textnormal{CLIP}\Sigma] \nonumber  \\
     & = \mathbb{E}[\epsilon^{'2}_\textnormal{CLIP}\Sigma^2] - \mathbb{E}^2[\epsilon^{'}_\textnormal{CLIP}]\mathbb{E}^2[\Sigma] \nonumber \\
    & = \mathbb{E}[\epsilon^{'2}_\textnormal{CLIP}]\mathbb{E}[\Sigma^2] - \mathbb{E}^2[\Sigma]
\end{align}
Consider that 
\begin{align}
\widehat{\Sigma_\textnormal{CLIP}}=\dfrac{\Sigma^{m}_{i}\Sigma^{n_i}_{j}(\x_{i,j}-\widehat{\mu_\textnormal{CLIP}})^2}{\Sigma^{m}_{i}n_{i}}=
\dfrac{\Sigma^{m}_{i}\epsilon^{2}_\textnormal{CLIP}n_{i}}{\Sigma^{m}_{i}n_{i}}=\epsilon^{'}_\textnormal{CLIP}\Sigma
\end{align}
We will have:
\begin{align}
    \epsilon^{'}_\textnormal{CLIP}=\dfrac{\epsilon^{2}_\textnormal{CLIP}}{\Sigma}
\end{align}
Thus, we have $\epsilon^{'2}_\textnormal{CLIP}=\dfrac{\epsilon^{4}_\textnormal{CLIP}}{\Sigma^2}$.
Next, we will compute $\mathbb{E}[\epsilon^{'2}_\textnormal{CLIP}]$ as follows:
\begin{align}
\label{eq:Ee'2}
    \mathbb{E}[\epsilon^{'2}_\textnormal{CLIP}]&=\mathbb{E}[\dfrac{\epsilon^{4}_\textnormal{CLIP}}{\Sigma^2}] \nonumber \\
    &= \dfrac{\mathbb{E}[\epsilon^{4}_\textnormal{CLIP}]}{\Sigma^2}
\end{align}
By definition, we have:
\begin{align}  \epsilon_\textnormal{CLIP}=\dfrac{\Sigma^m_{i}\Sigma^{n_{i}}_{j}(\x_{i,j}-\widehat{\mu_{i}})}{\Sigma^{m}_{i}n_{i}}
\end{align}
Therefore,
\begin{align}
\epsilon^2_\textnormal{CLIP}=\dfrac{(\Sigma^m_{i}\Sigma^{n_{i}}_{j}(\x_{i,j}-\widehat{\mu_{i}}))^2}{(\Sigma^{m}_{i}n_{i})^2}
\end{align}
As the value of $x_{i,j}-\widehat{\mu_i}$ can be either positive or negative, we have:
\begin{align}
    \epsilon^2_\textnormal{CLIP} \leq \dfrac{\Sigma^m_{i}\Sigma^{n_{i}}_{j}(\x_{i,j}-\widehat{\mu_{i}})^2}{\Sigma^{m}_{i}n_{i}}\dfrac{1}{\Sigma^m_i{n_i}}
\end{align}
Since both $(x_{i,j}-\widehat{\mu_i})^2$ and $n_i$ are positive values, we further have:
\begin{align}
    \epsilon^2_\textnormal{CLIP} \leq \Sigma^m_{i}\dfrac{\Sigma^{n_i}_{j}(\x_{i,j}-\widehat{\mu_{i}})^2)}{n_i}\dfrac{1}{\Sigma^m_i{n_i}}  
    =\dfrac{\Sigma^m_{i}\widehat{\Sigma_i}}{\Sigma^m_i{n_i}}   
    =\dfrac{\Sigma^m_{i}\epsilon^{'}_{i}\Sigma}{\Sigma^m_i{n_i}}
\end{align}
Thus, we can obtain
\begin{align}
    \epsilon^4_\textnormal{CLIP}\leq \dfrac{(\Sigma^m_{i}\epsilon^{'}_{i}\Sigma)^2}{(\Sigma^m_i{n_i})^2} = \dfrac{(\Sigma^m_{i}\epsilon^{'}_{i})^2\Sigma^2}{(\Sigma^m_i{n_i})^2}
\end{align}
Therefore, we have:
\begin{align}
\label{eq:epsilon4}
    \mathbb{E}[\epsilon^4_\textnormal{CLIP}] \leq \mathbb{E}[\dfrac{(\Sigma^m_{i}\epsilon^{'}_{i})^2\Sigma^2}{(\Sigma^m_i{n_i})^2}] = \dfrac{\mathbb{E}[(\Sigma^m_{i}\epsilon^{'}_{i})^2]\Sigma^2}{(\Sigma^m_i{n_i})^2}
\end{align}
By Assumption~\ref{assp:2}, $\epsilon^{'}_{i} \sim \textnormal{Exp}(\mathbf{1})$, we have $\mathbb{E}[\epsilon^{'}_{i}]=1$ and $\textnormal{VAR}(\epsilon^{'}_{i})=1$.

Since $\epsilon^{'}_{i}$ are independent with each other, we have:
\begin{align}
\label{eq:epsilon2}
    \mathbb{E}[(\Sigma^m_{i}\epsilon^{'}_{i})^2]&=\textnormal{VAR}(\Sigma^m_{i}\epsilon^{'}_{i})+\mathbb{E}^2[\Sigma^m_{i}\epsilon^{'}_{i}]\nonumber \\
    &=\Sigma^m_{i}\textnormal{VAR}(\epsilon^{'}_{i})+(\Sigma^m_{i}\mathbb{E}[\epsilon^{'}_{i}])^2 \nonumber  \\
    &=m+m^2
\end{align}
Substituting Eq.~\ref{eq:epsilon2} into Eq.~\ref{eq:epsilon4}, we have:
\begin{align}
    \mathbb{E}[\epsilon^4_\textnormal{CLIP}] \leq \dfrac{m+m^2}{(\Sigma^m_i{n_i})^2}\Sigma^2
\end{align}

Since $n_i \geq 1$ for $i=1,2,...,m$, we have $\Sigma^m_i{n_i} \geq m$ and $(\Sigma^m_i{n_i})^2 \geq m^2$.

Since $m \geq 1$, we have: $m^2 \geq m$.

Therefore,
\begin{align}
\label{eq:epsilon4final}
    \mathbb{E}[\epsilon^4_\textnormal{CLIP}] \leq \dfrac{m+m^2}{m^2}\Sigma^2 \leq \dfrac{2m^2}{m^2}\Sigma^2 = 2\Sigma^2
\end{align}
Substituting Eq.~\ref{eq:epsilon4final} into Eq.~\ref{eq:Ee'2}, we have:
\begin{align}
\label{eq:Ee'2final}
    \mathbb{E}[\epsilon^{'2}_\textnormal{CLIP}] \leq \dfrac{2\Sigma^2}{\Sigma^2}=2
\end{align}
Substituting Eq.~\ref{eq:Ee'2final} into Eq.~\ref{eq:varclip}, we have:
\begin{align}
    \textnormal{VAR}(\widehat{\Sigma_\textnormal{CLIP}}) = \mathbb{E}[\epsilon^{'2}_\textnormal{CLIP}]\mathbb{E}[\Sigma^2] - \mathbb{E}^2[\Sigma] \leq 2\mathbb{E}[\Sigma^2]-\mathbb{E}^2[\Sigma] = \mathbb{E}[\Sigma] = \Sigma
\end{align}

We summarize the above results as follows:
For conventional fixed dataset estimators, we have:
\begin{align}
    \mathbb{E}[\widehat{\mu_0} - \mu] = 
     \mathbf{0} \nonumber   \\
     \textnormal{VAR}(\widehat{\mu_0})=\mathbf{I} \nonumber \\
     \mathbb{E}[\widehat{\Sigma_0} - \Sigma] =\mathbf{0} \nonumber  \\
     \textnormal{VAR}(\widehat{\Sigma_0})=\Sigma^2 \nonumber
\end{align}

For CLIP-style estimators, we have:
\begin{align}
    \mathbb{E}[\widehat{\mu_\textnormal{CLIP}} - \mu] = \mathbf{0} \nonumber    \\
    \textnormal{VAR}(\widehat{\mu_\textnormal{CLIP}}) \leq \mathbf{I} \nonumber  \\
    \mathbb{E}[\widehat{\Sigma_\textnormal{CLIP}} - \Sigma] = \mathbf{0} \nonumber  \\
    \textnormal{VAR}(\widehat{\Sigma_\textnormal{CLIP}}) \leq \Sigma, \nonumber
\end{align}
where $\leq$ holds element-wise.
\end{proof}
 The results show that, both conventional estimator and zoo of CLIP-style estimator can recover the true $\mu, \Sigma$ of the unknown distribution, but zoo of CLIP-style estimator will have a lower variance, which is more stable to accomplish the task. This conclusion holds for any distributions.

 With these theoretical evidence, we kindly argue that biased towards the zoo of CLIP-style models is better than biased on conventional fixed datasets. In addition, recent advances in incorporating the foundation model into various tasks~\citep{liu2023more,zhang2023vitaev2,bose2023movieclip} also reveals that the community has utilized the foundation model on a large scale and pays little attention on these biases. 
 
\section{Notes on the Experimental Setup}
\subsection{Notes on Models}
Note that we only re-evaluate existing model checkpoints, and hence do not perform any hyperparameter tuning for evaluated models. Our model evaluations are done on 8 NVIDIA V100 GPUs. With our Sparsified VQGAN model, our method is also feasible to work with a small amount of GPU resources. As shown in Appendix~\ref{sec:sparsevqgan}, the proposed protocol can work on a single NVIDIA V100 GPU efficiently.

\subsection{Hyperparameter Tuning}
Our method is generally parameter-free except for the computation budget and perturbation step size. In our experiments, the computation budget is the maximum iteration number of Sparse VQGAN. We consider the predefined value to be 50, as it guarantees the degree of perturbation with acceptable time costs. We provide the experiment for step size configuration in Section~\ref{sec:ss}.

\section{In-depth Analysis on the Transformer Family}
\label{sec:multi-head}
In Table~\ref{tab:result:main}, we notice a large difference between the methods in the proposed FMR metric, even within the transformer family. After checking the distribution of misclassified perturbed images of different models, we find that these images are rather random and do not reveal any obvious "weak classes". One possible reason for this phenomenon may due to their internal architecture that are related to the self-attention (Self-Att) mechanism. Many current Vision Transformer architectures adopt a multi-head self-attention (MHSA) design where each head tends to focus on different object components. In some sense, MHSA can be interpreted as a mixture of information bottlenecks (IB) where the stacked Self-Att modules in Vision Transformers can be broadly regarded as an iterative repeat of the IB process which promotes grouping and noise filtering. More details of the connection between the Self-Att and IB can be found in (\citep{zhou2022understanding}, Sec.2.3). As revealed in~\citep{zhou2022understanding}, having more heads leads to improved expressivity and robustness. But the reduced channel number per head also causes decreased clean accuracy. The best trade-off is achieved with 32 channels per head.

Table~\ref{tab:ori_head_number} illustrates the head number configurations of various models employed in our experiment.

\begin{table*}[h]
\small 
\centering
\renewcommand{\arraystretch}{1.2}
\caption{Details of head numbers configurations.)}
\begin{tabular}{cc}
\hline
\textbf{Model} & \textbf{Head Number}   \\ \hline
ViT & 12  \\ 
DeiT & 12  \\ 
Twins & (3,6,12,24)  \\ 
Visformer & 6   \\ 
Swin & (4,8,16,32) \\ \hline
\end{tabular}
\label{tab:ori_head_number}
\end{table*}

Swin Transformer exhibits the highest number of heads among them. Despite its suboptimal accuracy on the standard dataset, it achieves the best FMR. This corroborates the finding in~\citep{zhou2022understanding} that increased head numbers enhance expressivity and robustness, albeit at the expense of clean accuracy.

To further verify the impact of head numbers, we trained Swin Transformer with varying head configurations and obtained the following results in Table~\ref{tab:head_number}.

\begin{table*}[h]
\small 
\centering
\renewcommand{\arraystretch}{1.2}
\caption{The performance of Swin Transformer with different head number configurations. We find that increased heads enhance expressivity and robustness.)}
\begin{tabular}{cccc}
\hline
\textbf{\#Params} & \textbf{Head Number}  & \textbf{SA} & \textbf{FMR} \\ \hline
88M & (2,4,8,16) & 80.82 & 64.85\\ 
88M & (3,6,12,24) & 81.98 & 67.48\\ 
88M & (4,8,16,32) & 81.67 & 69.73\\ 
88M & (5,10,20,40) & 81.05  & 69.97\\ \hline
\end{tabular}
\label{tab:head_number}
\end{table*}

With comparable numbers of parameters, we observe that their accuracies on the standard dataset are relatively similar. With the augmentation of head numbers, the FMR value also escalates, which validates our hypothesis that increased heads enhance expressivity and robustness.

\section{Transferability of Generated Images}
\label{sec:transfer}

We first study whether our generated images are model-specific, since the generation of the images involves the gradient of the original model. 
We train several architectures, namely 
EfficientNet \citep{tan2019efficientnet}, MobileNet \citep{howard2017mobilenets}, SimpleDLA \citep{yu2018deep}, VGG19 \citep{simonyan2014very}, PreActResNet \citep{he2016identity}, GoogLeNet \citep{szegedy2015going}, and DenseNet121 \citep{huang2017densely}
and test these models with the images that generated when testing ResNet. 
We also train another ResNet following the same procedure 
to check the transferability across different runs in one architecture. 



\begin{wraptable}{R}{0.4\textwidth}
\small 
\centering 
\caption{Performances of transferability.}
\begin{tabular}{cccc}
\hline
\textbf{Model} & \textbf{SA} & \textbf{PA} & \textbf{FMR}\\ \hline
ResNet & 95.38  & 51.67 & 54.17\\ \hline
ResNet & 94.67 & 56.09 & 59.25\\
DenseNet & 94.26 & 60.48 & 64.17\\
SimpleDLA & 92.25 & 61.03 & 66.16\\
GoogLeNet & 92.06 & 61.10 & 66.38\\
PreActResNet & 90.91 & 61.14 & 67.25\\
EfficientNet & 91.37 & 62.57 & 68.48\\
MobileNet & 91.63 & 62.97 & 68.72\\
VGG & 93.54 & 66.01 & 70.57\\  \hline

\end{tabular}

\label{tab:result:transfer}
\end{wraptable}

\textbf{Transferability of the generated images.} Table~\ref{tab:result:transfer} shows a 
reasonable transferability of the generated images
as the FMR are all lower than the SA, 
although 
we can also observe an improvement over the FMR when tested in the new models. 
These results suggest that our method of generating images 
can be potentially used in a broader scope: 
we can also leverage the method to generate a static set of images and set a benchmark dataset to help the development of robustness methods. 

\textbf{Reliability of the FMR metric.} Moreover, these results contribute to the validation of the reliability of the FMR metric: given that each model's FMR gets computed using a different test set, it is not clear why FMR would be a reliable metric that can be used to compare two models. In this experiment, however, the models are tested using the same fixed test set that was initially generated during the evaluation on ResNet. Remarkably, the strong correlation observed between FMR and PA at the fixed test sets lends credence to the reliability of the FMR metric.

\textbf{New findings.}
In addition, our results might potentially help mitigate a debate on 
whether more accurate architectures are naturally more robust: 
on one hand, we have results showing that more accurate architectures indeed 
lead to better empirical performances on certain (usually fixed) robustness benchmarks \citep{rozsa2016accuracy,hendrycks2019benchmarking};
while on the other hand, 
some counterpoints suggest the higher robustness numerical performances are only because these models capture 
more non-robust features that also happen exist in the fixed benchmarks \citep{tsipras2018robustness,wang2020high,taori2020measuring}. 
Table~\ref{tab:result:transfer} show some examples to support the latter argument:
in particular, we notice that VGG, while ranked in the middle of the accuracy ladder, 
interestingly
stands out when tested with generated images. 
These results continue to support our argument that 
a dynamic robustness test scenario can help reveal more properties of the model.

\begin{wraptable}{R}{0.4\textwidth}
\small
\centering 
\setlength{\abovecaptionskip}{0.2cm}
\setlength{\belowcaptionskip}{-0.2cm}
\caption{Results on whether initiating with adversarial images ($\epsilon = 0.003$).}
\begin{tabular}{ccc}
\hline
 \textbf{Data} &\textbf{SA}  & \textbf{FMR}\\ \hline
 regular & 95.38  & 57.80\\
 w. FGSM   & 95.30  &  65.79\\ \hline
\end{tabular}

\label{tab:results:fgsm}
\end{wraptable}

\section{Initiating with Adversarial Attacked Images}
\label{sec:fgsm}
Since our method using the gradient of the evaluated model
reminds readers about the gradient-based attack methods in adversarial robustness literature, 
we test whether initiating the perturbation process 
with an adversarial example 
will further degrade the accuracy. 

We first generate the images with FGSM attack \citep{goodfellow2015explaining}.
Table~\ref{tab:results:fgsm}. 
shows that initiating with the FGSM adversarial examples
barely affect the FMR, 
which is probably because the 
major style-wise perturbation 
will erase the imperceptible perturbations 
the adversarial examples introduce.

\section{Adversarially Robust Models}
\label{sec:pgd}
With evidence suggesting
the adversarially robust models  
are considered more human perceptually aligned \citep{engstrom2019adversarial,zhang2019interpreting,wang2020high}, 
we compare the vanilla model to a model trained by PGD \citep{madry2017towards} ($\ell_\infty$ norm smaller than 0.03). 

\begin{wraptable}{R}{0.4\textwidth}
\small 
\centering 
\caption{Performances comparison with vanilla model and PGD trained model.}
\begin{tabular}{ccccc}
\hline
\textbf{Data} & \textbf{Model} & \textbf{SA}  & \textbf{FMR}\\ \hline
\multirow{2}{*}{Van.} & Van. & 95.38   & 57.79\\
 & PGD & 85.70  & 95.96\\ \hline
\multirow{2}{*}{PGD} & Van. & 95.38  & 62.41 \\
 & PGD & 85.70  & 66.18 \\ \hline
\end{tabular}

\label{tab:result:adv}
\end{wraptable}

As shown in Table~\ref{tab:result:adv},
adversarially trained model and vanillaly trained model indeed process the data differently:
the transferability of the generated images between these two regimes can barely hold.
In particular, the PGD model can almost maintain its performances when tested with the images generated by the vanilla model. 

However, despite the differences,  
the PGD model's robustness weak spots are exposed to a similar degree 
with the vanilla model by our test system: 
the FMR of the vanilla model and the PGD model 
are only 57.79 and 66.18, respectively.  
We believe this result can further help advocate our belief 
that the robustness test needs to be 
a dynamic process generating images conditioning on the model to test, 
and thus further help validate the importance of our contribution.

\section{Augmentation through Static Adversarial Training}
\label{sec:advt}

Intuitively, inspired by the success of adversarial training \citep{madry2017towards} in defending 
models against adversarial attacks, 
a natural method to improve the empirical performances under our new test protocol 
is to augment the training data with perturbed training images 
generated by the same process. 
We aim to validate the effectiveness of this method here. 
\begin{wraptable}{R}{0.4\textwidth}
\setlength{\abovecaptionskip}{0.2cm}
\small
\centering 
\caption{Test performances of the model trained in a vanilla manner (denoted as Van.) or with augmentation data offered through our approach (marked by the second column). We report two sets of performances, split by whether the perturbed images are generated according to the vanilla model or the augmented model (marked by the first column).}
\begin{tabular}{ccccc}
\hline
\textbf{Data} & \textbf{Model} & \textbf{SA}  & \textbf{FMR}  \\ \hline
\multirow{2}{*}{Van.} & Van. & 95.38  & 57.82  \\
 & Aug & 87.41   & 89.13  \\ \hline
\multirow{2}{*}{Aug.} & Van. & 95.38  & 58.03  \\
 & Aug & 87.41   & 78.61  \\
 \hline
\end{tabular}
     
\label{tab:result:aug}
\end{wraptable}

However, the computational load of generation process is not ideal to serve 
the standard adversarial training strategy, 
and we can only have one copy of the perturbed training samples. 
Fortunately, 
we notice that some recent advances in training with data augmentation  
can help learn robust representations with a limited scope of augmented samples \citep{wang2020squared}, 
which we use here.


We report our results in Table~\ref{tab:result:aug}. The first thing we observe is that 
the model trained with the augmentation data offered through our approach could preserve a relatively higher performance (FMR 89.13) when testing with the perturbed images generated according to the vanilla model. Since we have shown the perturbed samples have a reasonable transferability in the main manuscript, this result indicates the robustness we brought when training with the perturbed images generated by our approach. 

In addition, when tested with the perturbed images generated according to the augmented model, the augmented model displays a marked resilience (FMR 78.61) in the face of these perturbations compared with the model trained in a vanilla manner (FMR 58.03). Nevertheless, it is noteworthy that the augmented model's performance does exhibit a discernible decline under these circumstances, which once more underscores the efficacy of our approach.

\section{Grayscale Models}
\label{sec:greyscale}

Our previous visualization suggests that 
a shortcut the perturbed generation system can take  
is to significantly shift the color of the images, 
for which a grey-scale model should easily maintain the performance. 
Thus, we train a grayscale model by changing the ResNet input channel to be 1 and transforming the input images to be grayscale upon feeding into the model. 
We report the results in Table~\ref{tab:result:gray}.

Interestingly, we notice that the grayscale model 
cannot defend against the shift introduced by our system by ignoring the color information. 
On the contrary, 
it seems to encourage our system to generate more perturbed images that can lower the performances.


\begin{table}[h]
\caption{Test performances of the model trained in a vanilla manner (denoted as Van.) or with grayscale model. We report two sets of performances, split by whether the perturbed images are generated according to the vanilla model or the grayscale one (marked by the first column).}
\centering 
\begin{tabular}{cccc}
\hline
\textbf{Data} & \textbf{Model} & \textbf{SA} & \textbf{FMR}  \\ \hline
\multirow{2}{*}{Van.} & Van. & 95.38 & 57.79 \\
 & Gray & 93.52  & 66.06 \\ \hline
\multirow{2}{*}{Gray} & Van. & 95.38 & 67.48  \\
 & Gray & 93.52 & 44.76 \\ \hline
\end{tabular}

\label{tab:result:gray}
\end{table}

In addition, we visualize some perturbed images generated according to each model and show them in Figure~\ref{fig:compare:gray}. We can see some evidence that the graycale model forces the generation system to focus more on the shape of the object and less of the color of the images.
We find it particularly interesting that our system sometimes generates 
different images differently for different models 
while the resulting images deceive the respective model to make the same prediction. 

\begin{figure}[h]
    \centering
        \begin{subfigure}{0.15\textwidth}
            \centering
            \includegraphics[width=\textwidth]{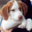} 
            \caption{dog} 
        \end{subfigure}
        \begin{subfigure}{0.15\textwidth}
            \centering
            \includegraphics[width=\textwidth]{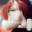} 
            \caption{horse} 
        \end{subfigure}
        \begin{subfigure}{0.15\textwidth}
            \centering
            \includegraphics[width=\textwidth]{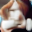} 
            \caption{horse} 
        \end{subfigure}
        \begin{subfigure}{0.15\textwidth}
            \centering
            \includegraphics[width=\textwidth]{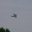} 
            \caption{plane} 
        \end{subfigure}
        \begin{subfigure}{0.15\textwidth}
            \centering
            \includegraphics[width=\textwidth]{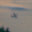} 
            \caption{ship} 
        \end{subfigure}
        \begin{subfigure}{0.15\textwidth}
            \centering
            \includegraphics[width=\textwidth]{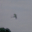} 
            \caption{bird} 
        \end{subfigure}
        \\
        \begin{subfigure}{0.15\textwidth}
            \centering
            \includegraphics[width=\textwidth]{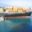} 
            \caption{ship} 
        \end{subfigure}
        \begin{subfigure}{0.15\textwidth}
            \centering
            \includegraphics[width=\textwidth]{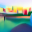} 
            \caption{car} 
        \end{subfigure}
        \begin{subfigure}{0.15\textwidth}
            \centering
            \includegraphics[width=\textwidth]{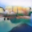} 
            \caption{plane} 
        \end{subfigure}
        \begin{subfigure}{0.15\textwidth}
            \centering
            \includegraphics[width=\textwidth]{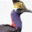} 
            \caption{bird} 
        \end{subfigure}
        \begin{subfigure}{0.15\textwidth}
            \centering
            \includegraphics[width=\textwidth]{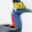} 
            \caption{ship} 
        \end{subfigure}
        \begin{subfigure}{0.15\textwidth}
            \centering
            \includegraphics[width=\textwidth]{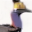} 
            \caption{ship} 
        \end{subfigure}
\caption{Visualization of the perturbed images generated by our system in evaluating the vanilla model (middle image of each group) and the grayscale model (third image of each group), with the original image shown. The caption for each image is either the original label or the predicted label by the corresponding model.}
\label{fig:compare:gray}
\end{figure}

\section{Sparse Submodel of VQGAN for Efficient Perturbation}
\label{sec:sparsevqgan}


While our method will function properly as described above, 
we notice that the generation process still has a potential limitation: 
the bound-free perturbation of VQGAN will sometimes perturb the semantics 
of the images, generating results that will be rejected by the foundation model later and thus leading to a waste of computational efforts. 

To counter this challenge, we use a sparse variable selection method
to analyze the 
embedding dimensions of VQGAN to identify a subset of dimensions 
that is mainly responsible for the non-semantic variations. 

In particular, with a dataset $(\X, \Y)$ of $n$ samples, we first use VQGAN to generate a style-transferred dataset $(\X', \Y)$. During the generation process, we preserve the latent representations of input samples after the VQGAN encoder in the original dataset. We also preserve the final latent representations before the VQGAN decoder that are quantized after the iterations in the style-transferred dataset.
Then, we create a new dataset $(\E, \bL)$ of $2n$ samples, 
for each sample $(\e, l) \in (\E, \bL)$, 
$\e$ is the latent representation for the sample 
(from either the original dataset or the style-transferred one), 
and $l$ is labelled as 0 if the sample is from the original dataset and 1 if the style-transferred dataset. 

Then, we train $\ell_1$ regularized logistic regression model 
to classify the samples of $(\E, \bL)$. With $\w$ denoting the weights of the model, we solve the following problem
\begin{align*}
    \argmin_{\w} \sum_{(\e, l) \in (\E, \bL)} l(\e\w, l) + \lambda \Vert \w \Vert_1,
\end{align*}
and the sparse pattern (zeros or not) of $\w$ will inform us about which dimensions are for the style. 


We generate the flattened latent representations of input images after the VQGAN Encoder with negative labels. Following Algorithm~\ref{alg:main}, we generate the flattened final latent representations before the VQGAN decoder with positive labels. Altogether, we form a binary classification dataset where the number of positive and negative samples is balanced. The positive samples are the latent representations of perturbed images while the negative samples are the latent representations of input images. We set the split ratio of train and test set to be $0.8:0.2$. We perform the explorations on various datasets, i.e. MNIST, CIFAR-10, 9-class ImageNet and ImageNet. 

The classification model we consider is LASSO\footnote{\noindent  Although LASSO is originally a regression model, we probabilize the regression values to get the final classification results.} as it enables automatically feature selection with strong interpretability. We set the regularization strength to be $36.36$. We adopt saga~\citep{defazio2014saga} as the solver to use in the optimization process. The classification results are shown in Table~\ref{tab:lasso}.

We observe that the coefficient matrix of features can be far sparser than we expect. We take the result of 9-class ImageNet as an example. Surprisingly, we find that almost 99.31\% dimensions in average could be discarded when making judgements. We argue the preserved 0.69\% dimensions are highly correlated to VQGAN perturbation.
Therefore, we keep the corresponding 99.31\% dimensions unchanged and only let the rest 0.69\% dimensions participate in computation. Our computation loads could be significantly reduced while still maintain the competitive performance compared with the unmasked version\footnote{\noindent  We note that the overlapping degree of the preserved dimensions for each dataset is not high, which means that we need to specify these dimensions when facing new datasets.}. 

We conduct the run-time experiments on a single NVIDIA V100 GPU. Following our experiment setting, we evaluate a vanilla ResNet-18 on 9-class ImageNet and a vanilla ResNet-50 on ImageNet. As shown in Table~\ref{tab:efficiency}, the run-time on ImageNet can be reduced by 28.5\% with our sparse VQGAN. Compared with large-scale masked dimensions (\textit{i.e.,} 99.31\%), we attribute the relatively incremental run-time improvement (\textit{i.e.,} 12.7\% on 9-class ImageNet, 28.5\% on ImageNet) to the fact that we have to perform mask and unmask operations each time when calculating the model gradient, which offsets the calculation efficiency brought by the sparse VQGAN to a certain extent. 

\begin{table}
	\centering
	\renewcommand{\arraystretch}{1.2}
	\captionof{table}{Classification results between vanilla and perturbed images with LASSO.}
	\begin{tabular}{c|c|c}
\hline
\textbf{Data} & \textbf{Sparsity} & \textbf{Test score}  \\ \hline
MNIST  & 97.99 & 78.50 \\
CIFAR-10  & 98.45 & 78.00 \\ 
9-class ImageNet & 99.31 & 72.00 \\
ImageNet  & 99.32 & 69.00  \\
 \hline
\end{tabular}

\label{tab:lasso}
\end{table}

\begin{table}[h]
    \centering
    \renewcommand{\arraystretch}{1.2}
    \caption{Run-time Comparision between VQGAN and Sparse VQGAN.}
    \begin{tabular}{ccc}
\hline
\multirow{2}{*}{\textbf{Method}} &
\multicolumn{2}{c}{\textbf{Time}} \\
  & \textbf{9-class ImageNet} & \textbf{ImageNet} \\ \hline
VQGAN   & $521.5\pm 1.2$s & $52602.4\pm 2.7$s\\
Sparse VQGAN   & $455.4\pm 1.2$s & $40946.1\pm 2.7$s\\ \hline
\textit{Improv.}  & 12.7\% & 28.5\%\\ \hline
\end{tabular}
    
    \label{tab:efficiency}
\end{table}

\section{Analysis of Samples that are Misclassified by the Model}
\label{sec:9classdetail}
We notice that,
the CLIP model has been influenced by the imbalance sample distributions across the Internet. 

In this experiment, we choose a larger step size so that the foundation model may not be able to maintain the image-label structure at the first perturbation. We report the Validation Rate (VR) which is the percentage of images validated by the foundation model that maintains the image-label structure. (In our official configurations, the step size value is small enough that the VR on each dataset is always 1. Therefore, we omit this value in the main experiments.) We present the results on 9-class ImageNet experiment to show the details for each category.




\begin{table*}[h]
\small 
\centering
\renewcommand{\arraystretch}{1.2}
\caption{Details of test on 9-class ImageNet for vanilla ResNet-18 (step size is 0.1, computation budget $\B$ is 50)}
\begin{tabular}{cccc}
\hline
\textbf{Type} & \textbf{SA}  & \textbf{VR}  & \textbf{FMR}    \\ \hline
Dog & 93.33  & 95.33  & 17.98  \\ 
Cat & 96.67  & 94.00  & 31.55  \\ 
Frog & 85.33  & 80.67  & 20.34  \\ 
Turtle & 84.67  & 78.67  & 29.03   \\ 
Bird & 91.33  & 96.00  & 28.13  \\ 
Primate & 96.00  & 48.00  & 62.21  \\ 
Fish & 94.00  & 76.67  & 45.33  \\ 
Crab & 96.00  & 87.33 & 19.87  \\ 
Insect & 93.33  & 78.00  & 33.88  \\ \hline
Total & 92.30  & 81.63  & 30.28  \\\hline
\end{tabular}

\label{tab:result:case}
\end{table*}

Table~\ref{tab:result:case} shows that the VR values for most categories are still higher than 80\%, some even reach 95\%, which means we produce sufficient number of perturbed images. However, we notice that the VR value for \textit{primate} images is quite lower compared with other categories, indicating around 52\% perturbed \textit{primate} images are blocked by the orcle. 



As shown in Table~\ref{tab:result:case}, the FMR value for each category significantly drops compared with the SA value, indicating the weakness of trained models. An interesting finding is that the FMR value for \textit{Primate} images are quite higher than other categories, given the fact that more perturbed \textit{Primate} images are blocked by the foundation model. We attribute it to the limitation of foundation models. As the CLIP model has been influenced by the imbalance sample distributions across the Internet, it could only handle easy perturbed samples well. Therefore, the perturbed images preserved would be those that can be easily classified by the models.

\begin{table*}[h]
\small 
\centering
\renewcommand{\arraystretch}{1.2}
\caption{Details of test on 9-class ImageNet for vanilla ResNet-18 (step size is 0.001, computation budget $\B$ is 50)}
\begin{tabular}{cccc}
\hline
\textbf{Type} & \textbf{SA}  & \textbf{VR}  & \textbf{FMR}    \\ \hline
Dog & 93.33  & 100.00  & 18.09  \\ 
Cat & 96.67  & 100.00  & 28.60  \\ 
Frog & 85.33  & 100.00  & 20.72  \\ 
Turtle & 84.67  & 100.00  & 24.80   \\ 
Bird & 91.33  & 100.00  & 27.68  \\ 
Primate & 96.00  & 100.00  & 27.11  \\ 
Fish & 94.00  & 100.00  & 25.13  \\ 
Crab & 96.00  & 100.00 & 19.15  \\ 
Insect & 93.33  & 100.00  & 23.16  \\ \hline
Total & 92.30  & 100.00  & 23.94  \\\hline
\end{tabular}
\label{tab:result:case001}
\end{table*}

In our official configuration, we set a relatively smaller step size to perturb the image and obtain enough more perturbed images. As shown in Table~\ref{tab:result:case001}, using a smaller step size value and enough computation budget barely affect the overall results. In addition, with smller step size, we manage to perturb the image little by little and can get enough more perturbed images (\textbf{VR becomes 100 on every category}, indicating that all the images are perturbed and maintained their image-label structure). Admittedly, the foundation model's bias still exists here, e.g., the \textit{Primate} images (FMR = 28.11) are still easier than \textit{Dog} images (FMR = 18.09). However, considering the huge performance gap between the foundation model and the evaluated models, images that are easy for the foundation model are hard enough for the evaluated models (The FMR of \textit{Dogs} and \textit{Primate} images are closer and smaller compared with those in Table~\ref{tab:result:case}), which is sufficiently efficacious for real-world applications. Additionally, the employment of an ensemble of multiple foundation models in our methodology serves to provide a further layer of alleviation for the aforementioned issue.

\section{Discussions on the Societal Bias of Relying on Large Models}
\label{sec:societal_bias}
\subsection{Potential Negative Impacts of Foundation Models}
Although the bias incurred by foundation models is less detrimental than the biases arisen from fixed benchmark datasets, a more detailed discussion on the potential negative impacts is necessary. One potential bias of making vision models behave more like the foundation models is that the vision model may inherit the limitations and assumptions of foundation models’ training data and objective function. For example, foundation models’ training data may not cover all possible visual concepts or scenarios that are relevant to a given task; foundation models’ objective function may not align with the desired outcome or evaluation metric of a given task; foundation models’ natural language supervision may introduce ambiguities or inconsistencies that affect the model’s performance or interpretation. These limitations and assumptions may affect the generalization and robustness of vision models that rely on foundation models. Moreover, we add recent works that especially investigate the bias of foundation models, and guide the readers to it for further warning, e.g.,~\citep{menon2022task} and~\citep{zhou2022vlstereoset}.

\subsection{Societal Bias of Relying on Large Models}
Moreover, our method relies on large models, where their societal bias is still unclear, therefore a related discussion would be beneficial.

Large-scale models could leverage the rich knowledge and generalization ability encoded in the training stage. However, one potential societal bias of relying on large models’ supervision on preserving the perturbed image could be that it would privilege certain groups or perspectives over others based on social or cultural norms. As the data used to train the pre-trained models may be imbalanced, incomplete, or inaccurate, leading to biased representations of certain groups or concepts, the perturbed images preserved by the pre-trained models may reflect stereotypes, or discrimination against certain groups of people based on their race, gender, age, religion, etc., which may be harmful, offensive, or deceptive to the users. Bridging the gap between the pre-trained model and the evaluated vision models will make the vision models inherit the limitations of pre-trained models, which have adverse consequences for people who are affected by them, such as reinforcing stereotypes, discrimination, or exclusion.

We add recent works that investigate the societal bias of large models, and guide the readers to it for further warning, \textit{e.g.,}~\citep{wang2022fairclip}.


\section{Experiments on the Zero-shot Adversarial Robustness of CLIP}
\label{zero-shot-adv}
We conduct the following experiment to compare the adversarial vulnerability between CLIP and robust ViT-like model pre-trained checkpoints of XCiT-L12~\citep{debenedetti2022light} from the RobustBench Leaderboard~\citep{croce2020robustbench}. The results are shown in Table~\ref{tab:zs-adv}. We find that the vanilla CLIP shows a better robustness performance under our quick experiments through FGSM attack. However, if we continue the attack process, we will eventually obtain the adversary that changes the CLIP's classification decision to the targeted class. 

\begin{table}[h]
    \centering
    \renewcommand{\arraystretch}{1.2}
    \caption{Comparison of the zero-shot adversarial robustness of CLIP with pretrained robust model. We find that CLIP shows a better robustness performance compared with XCiT-L12. We note that the CLIP's classification decision can be changed to the targeted class as attack continues.} 
    \begin{tabular}{ccccccc}
\hline
\multirow{2}{*}{\textbf{Step}} &
\multicolumn{2}{c}{\textbf{Target loss}} &
\multicolumn{2}{c}{\textbf{p[true=0]}} &
\multicolumn{2}{c}{\textbf{p[target=1]}}\\
  & \textbf{CLIP} & \textbf{ XCiT-L12} & \textbf{CLIP} & \textbf{ XCiT-L12} & \textbf{CLIP} & \textbf{ XCiT-L12}\\ \hline
0 & 8.621 & 4.712 & 0.6749 & 0.7437 & 0.0052 & 0.0728 \\ 
20 & 2.715 & 1.605 & 0.5083 & 0.4074 & 0.0986 & 0.2009 \\
40 & 2.316 & 0.8877 & 0.4007 & 0.2562 & 0.1357 & 0.3116 \\
60 & 1.684 & 0.7420  & 0.2177 & 0.1407 & 0.2144 & 0.4760 \\
80 & 1.540 & 0.6520 & 0.1813 & 0.1338 & 0.3335 & 0.5210\\ \hline
\end{tabular}
    \label{tab:zs-adv}
\end{table}


Fortunately, in production, one can use simpler techniques such as gradient masking to protect CLIP's weights from malicious users, thus, the opportunities of the CLIP being attacked from a white-box manner are quite low. In terms of black-box attacks, CLIP actually shows a strong resilience toward the adversarial samples generated for other models, for which we also have some supporting evidence: In Appendix~\ref{sec:fgsm}, we generate the images with the FGSM attack by the tested model. Table~\ref{tab:results:fgsm} shows that initiating with the FGSM adversarial examples barely affects the FMR, which implies that CLIP succeeds in defending these black-box adversarial images and preserving the hard ones such that the FMR does not change significantly (Otherwise, CLIP will discard heavily perturbed images and preserve easy ones with minor perturbation, leading to high FMR values). Furthermore, our approach incorporates an ensemble of foundation models, including robust models such as ConvNext-T-CvSt from the RobustBench Leaderboard, and employs a majority vote mechanism to validate the fidelity of the image-label relationships.

Thus, CLIP, especially when equipped with techniques to protect its weights and gradients, and coupled with an ensemble of robust foundation models, might be the closest one to serve as the ideal foundation models to maintain the image-label structure at this moment.

\section{List of Evaluated Models}
\label{sec:modellist}
The following lists contains all models we evaluated on various datasets with references and links to the corresponding source code.

\subsection{Pretrained VQGAN Model}
We use the checkpoint of vqgan\_imagenet\_f16\_16384 from \url{https://heibox.uni-heidelberg.de/d/a7530b09fed84f80a887/}

\subsection{Pretrained Foundation Models}
\begin{enumerate}
    \item Model weights of ViT-B/32 and usage code are taken from \url{https://github.com/openai/CLIP}
    \item CoCa~\citep{Yu2022CoCaCC} \url{https://github.com/lucidrains/CoCa-pytorch}
    \item ConvNeXt-T-CvSt~\citep{singh2023revisiting} \url{https://github.com/nmndeep/revisiting-at}
\end{enumerate}


\subsection{Timm Models Trained on ImageNet~\citep{rw2019timm}}
Weights are taken from \url{https://github.com/rwightman/pytorch-image-models/tree/master/timm/models}

\begin{enumerate}
    \item ResNet50~\citep{he2016deep}
    \item ViT~\citep{dosovitskiy2020image}
    \item DeiT~\citep{touvron2021training}
    \item Twins~\citep{chu2021twins}
    \item Visformer~\citep{chen2021visformer}
    \item Swin~\citep{liu2021swin}
    \item ConvNeXt~\citep{liu2022convnet}
\end{enumerate}

\subsection{Robust ResNet50 Models}
\begin{enumerate}
    \item ResNet50 SIN+IN~\citep{geirhos2018imagenettrained} \url{https://github.com/rgeirhos/texture-vs-shape}
    \item ResNet50 ANT~\citep{rusak2020increasing} \url{https://github.com/bethgelab/game-of-noise}
    \item ResNet50 ANT+SIN~\citep{rusak2020increasing} \url{https://github.com/bethgelab/game-of-noise}
    \item ResNet50 Augmix~\citep{hendrycks2019augmix} \url{https://github.com/google-research/augmix}
    \item ResNet50 DeepAugment~\citep{hendrycks2021many} \url{https://github.com/hendrycks/imagenet-r}
    \item ResNet50 DeepAugment+Augmix~\citep{hendrycks2021many} \url{https://github.com/hendrycks/imagenet-r}
    \item ResNet50 Discrete Adversarial Training (DAT)~\citep{mao2022enhance} \url{https://github.com/alibaba/easyrobust}
\end{enumerate}

\subsection{Additional Image Generators}
\begin{enumerate}
    \item Efficient-VDVAE~\citep{hazami2022efficient} \url{https://github.com/Rayhane-mamah/Efficient-VDVAE}
    \item Improved DDPM~\citep{nichol2021improved} \url{https://github.com/open-mmlab/mmgeneration/tree/master/configs/improved_ddpm}
    \item ADM~\citep{dhariwal2021diffusion} \url{https://github.com/openai/guided-diffusion}
    \item StyleGAN~\citep{sauer2022stylegan} \url{https://github.com/autonomousvision/stylegan_xl}
\end{enumerate}

\subsection{Pretrained XCiT-L12 Model}
Model weights of XCiT-L12~\citep{debenedetti2022light} are taken from \url{https://github.com/dedeswim/vits-robustness-torch}

\section{Leaderboards for Robust Image Model}
We launch leaderboards for robust image models. The goal of these leaderboards are as follows:
\begin{itemize}[leftmargin=*]
    \item To keep on track of state-of-the-art on each adversarial vision task and new model architectures with our dynamic evaluation process.
    \item To see the comparison of robust vision models at a glance (\textit{e.g.,} performance, speed, size, \textit{etc.}). 
    \item To access their research papers and implementations on different frameworks. 
\end{itemize}

We offer a sample of the robust ImageNet classification leaderboard in supplementary materials. 

\section{Additional Perturbed Image Samples}
\label{sec:moresample}

\begin{figure}[h]
    \centering
    \captionsetup[subfigure]{labelformat=empty}
        \begin{subfigure}{0.13\textwidth}
            \centering
            \includegraphics[width=\textwidth]{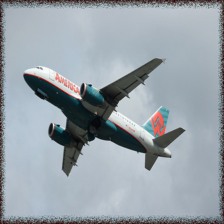} 
            \caption{airliner} 
            
        \end{subfigure}
        \begin{subfigure}{0.13\textwidth}
            \centering
            \includegraphics[width=\textwidth]{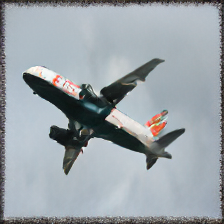}
            \caption{wing} 
            \label{fig:1sin}
            
        \end{subfigure}
        \begin{subfigure}{0.13\textwidth}
            \centering
            \includegraphics[width=\textwidth]{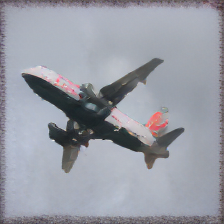}
            \caption{shuttle} 
        \end{subfigure}
        \begin{subfigure}{0.13\textwidth}
            \centering
            \includegraphics[width=\textwidth]{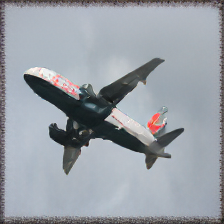}
            \caption{warplane} 
            \label{fig:1antsin}
        \end{subfigure}
        \begin{subfigure}{0.13\textwidth}
            \centering
            \includegraphics[width=\textwidth]{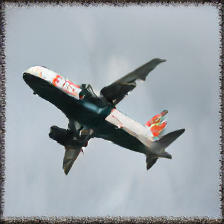}
            \caption{wing} 
        \end{subfigure}
        \begin{subfigure}{0.13\textwidth}
            \centering
            \includegraphics[width=\textwidth]{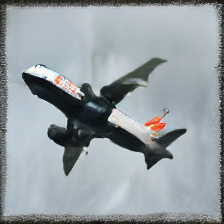}
            \caption{warplane} 
            \label{fig:1deepaug}
        \end{subfigure}
        \begin{subfigure}{0.13\textwidth}
            \centering
            \includegraphics[width=\textwidth]{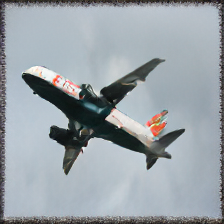}
            \caption{warplane}
            \label{fig:1deepaug+am}
        \end{subfigure}
        \\
        \begin{subfigure}{0.13\textwidth}
            \centering
            \includegraphics[width=\textwidth]{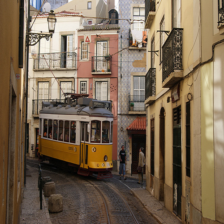}
            \caption{tram} 
        \end{subfigure}
        \begin{subfigure}{0.13\textwidth}
            \centering
            \includegraphics[width=\textwidth]{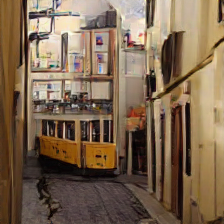}
            \caption{prison} 
            \label{fig:sin2}
        \end{subfigure}
        \begin{subfigure}{0.13\textwidth}
            \centering
            \includegraphics[width=\textwidth]{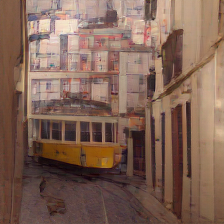}
            \caption{crib} 
        \end{subfigure}
        \begin{subfigure}{0.13\textwidth}
            \centering
            \includegraphics[width=\textwidth]{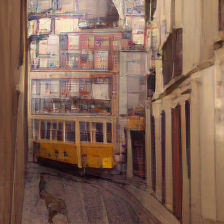} 
            \caption{vending} 
            \label{fig:antsin2}
        \end{subfigure}
        \begin{subfigure}{0.13\textwidth}
            \centering
            \includegraphics[width=\textwidth]{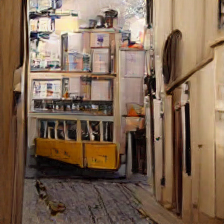} 
            \caption{chest} 
        \end{subfigure}
        \begin{subfigure}{0.13\textwidth}
            \centering
            \includegraphics[width=\textwidth]{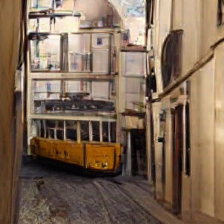}
            \caption{school bus} 
        \end{subfigure}
        \begin{subfigure}{0.13\textwidth}
            \centering
            \includegraphics[width=\textwidth]{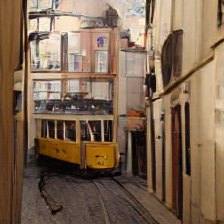}
            \caption{tram} 
        \end{subfigure}
        \\
        \begin{subfigure}{0.13\textwidth}
            \centering
            \includegraphics[width=\textwidth]{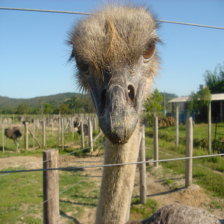} 
            \caption{ostrich} 
            
        \end{subfigure}
        \begin{subfigure}{0.13\textwidth}
            \centering
            \includegraphics[width=\textwidth]{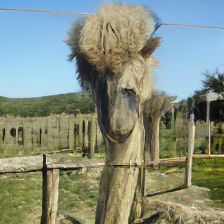} 
            \caption{dromedary} 
            \label{fig:3sin}
            
        \end{subfigure}
        \begin{subfigure}{0.13\textwidth}
            \centering
            \includegraphics[width=\textwidth]{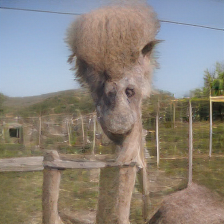} 
            \caption{llama} 
        \end{subfigure}
        \begin{subfigure}{0.13\textwidth}
            \centering
            \includegraphics[width=\textwidth]{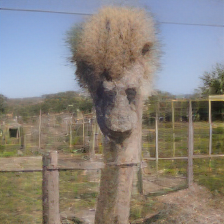}
            \caption{poodle} 
            \label{fig:3antsin}
        \end{subfigure}
        \begin{subfigure}{0.13\textwidth}
            \centering
            \includegraphics[width=\textwidth]{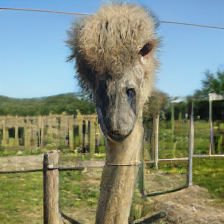}
            \caption{llama} 
        \end{subfigure}
        \begin{subfigure}{0.13\textwidth}
            \centering
            \includegraphics[width=\textwidth]{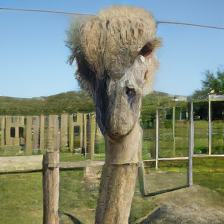}
            \caption{dromedary} 
            \label{fig:3deepaug}
        \end{subfigure}
        \begin{subfigure}{0.13\textwidth}
            \centering
            \includegraphics[width=\textwidth]{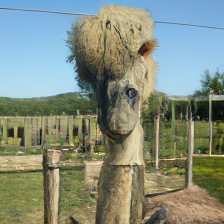}
            \caption{llama}
            \label{fig:3deepaug+am}
        \end{subfigure}
        \\
        \begin{subfigure}{0.13\textwidth}
            \centering
            \includegraphics[width=\textwidth]{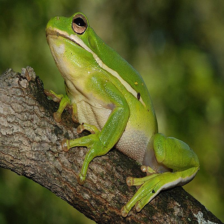} 
            \caption{tree frog} 
            
        \end{subfigure}
        \begin{subfigure}{0.13\textwidth}
            \centering
            \includegraphics[width=\textwidth]{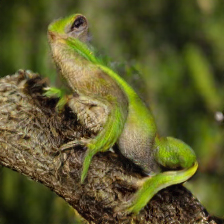}
            \caption{mamba} 
            \label{fig:4sin}
            
        \end{subfigure}
        \begin{subfigure}{0.13\textwidth}
            \centering
            \includegraphics[width=\textwidth]{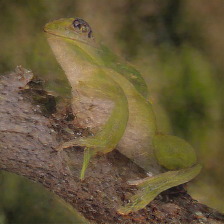}
            \caption{greenlizard} 
        \end{subfigure}
        \begin{subfigure}{0.13\textwidth}
            \centering
            \includegraphics[width=\textwidth]{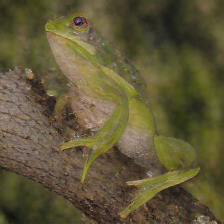}
            \caption{greenlizard} 
            \label{fig:4antsin}
        \end{subfigure}
        \begin{subfigure}{0.13\textwidth}
            \centering
            \includegraphics[width=\textwidth]{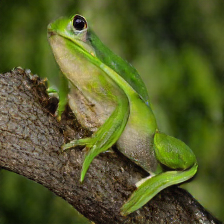}
            \caption{greenlizard} 
        \end{subfigure}
        \begin{subfigure}{0.13\textwidth}
            \centering
            \includegraphics[width=\textwidth]{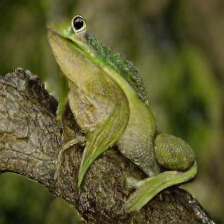}
            \caption{chameleon} 
            \label{fig:4deepaug}
        \end{subfigure}
        \begin{subfigure}{0.13\textwidth}
            \centering
            \includegraphics[width=\textwidth]{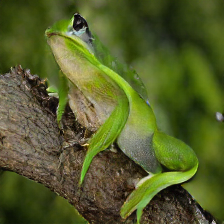}
            \caption{anole}
            \label{fig:4deepaug+am}
        \end{subfigure}
        \\
        \begin{subfigure}{0.13\textwidth}
            \centering
            \includegraphics[width=\textwidth]{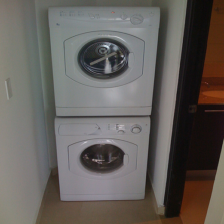}
            \caption{washer} 
            
        \end{subfigure}
        \begin{subfigure}{0.13\textwidth}
            \centering
            \includegraphics[width=\textwidth]{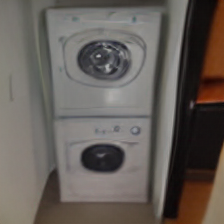}
            \caption{speaker} 
            \label{fig:5sin}
            
        \end{subfigure}
        \begin{subfigure}{0.13\textwidth}
            \centering
            \includegraphics[width=\textwidth]{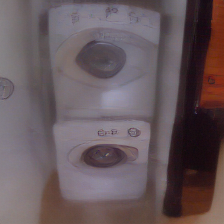}
            \caption{iPod} 
        \end{subfigure}
        \begin{subfigure}{0.13\textwidth}
            \centering
            \includegraphics[width=\textwidth]{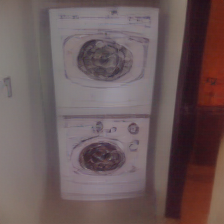}
            \caption{microwave} 
            \label{fig:5antsin}
        \end{subfigure}
        \begin{subfigure}{0.13\textwidth}
            \centering
            \includegraphics[width=\textwidth]{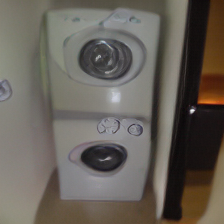}
            \caption{remote} 
        \end{subfigure}
        \begin{subfigure}{0.13\textwidth}
            \centering
            \includegraphics[width=\textwidth]{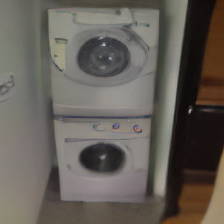}
            \caption{iPod} 
            \label{fig:5deepaug}
        \end{subfigure}
        \begin{subfigure}{0.13\textwidth}
            \centering
            \includegraphics[width=\textwidth]{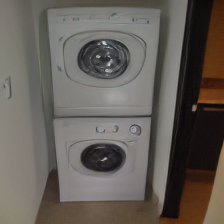}
            \caption{speaker}
            \label{fig:5deepaug+am}
        \end{subfigure}
\caption{Visualization of the images generated by our system in evaluating the common corruption robust model, with the original image shown (left image of each row). The caption for each image is either the original label or the predicted label by the corresponding model. The evaluated models are SIN, ANT, ANT+SIN, Augmix, DeepAug and DeepAug+AM from left to right.}
\label{fig:style-robust-all}
\end{figure}

In Figure~\ref{fig:style-robust-all}, we provide additional perturbed images generated according to each model. We have similar observations to Section~\ref{sec:robust}. First, the generated perturbed images exhibit diversity that many other superficial factors of the data would be covered, \textit{i.e.,} texture, shape and styles. Second, our method could recognize the model properties, and automatically generate those hard perturbed images to complete the evaluation.

In addition, the generated images show a reasonable transferability in Table~\ref{tab:result:transfer}, indicating tha our method can be potentially used in a broader scope: 
we can also leverage the method to generate a static set of images and set a benchmark dataset to help the development of robustness methods. Therefore, we also offer two static benchmarks in supplementary materials that are generated based on CNN architecture, \textit{i.e.,} ConvNext and transformer variant, \textit{i.e.,} ViT, respectively.

\section{Discussion on the realism of the generated images}
\label{sec:realism}
We notice that some generated images look unnatural, as the generated images being realistic is not part of the optimization function. We acknowledge that making the generated images appear more natural will be a further desideratum, as this contributes to enhancing the human-perceptible interpretability. 

Nonetheless, the current research agenda of the robustness evaluation community is still to encourage the evaluation to expose the model's weakness, such as to expose and eliminate the model's learning of spurious correlation in rare cases.

Similar evidence can be found in~\citep{xiao2023masked}, where the authors utilize masked images as counterfactual samples for robust fine-tuning. In this paper, the authors argue that masked images can break the spurious correlation between features and labels that may degrade OOD robustness, and that feature-based distillation with the pre-trained model on these counterfactual samples can achieve a better trade-off between IID and OOD performance. According to our second desideratum, our generated counterfactual images might also look unnatural. However, although it appears unnatural, it is beneficial in uncovering and eliminating spurious correlations for enhancing the model robustness.

\end{document}